\newcommand{\lina}[1]{{{\textcolor{blue}{Lina: #1}}}}
\renewcommand{\b}[1]{\left(#1\right)}
\newcommand{\boldtitle}[1]{\medskip\noindent\textbf{#1}\;}
\renewcommand{\R}{\mathbb{R}}
\newcommand{\hK}{\widehat{K}}
\newcommand{\hKBC}{\widehat{K}_{\mathrm{BC}}}
\newcommand{\hKMS}{\widehat{K}_{\mathrm{PIL}}}
\newcommand{\omegaBC}{\omega_{\mathrm{BC}}}
\newcommand{\omegaPIL}{\omega_{\mathrm{PIL}}}
\newcommand{\hG}{\widehat{G}}
\newcommand{\Ks}{K^{\star}}
\newcommand\setversion[1]{%
    \def\tempa{#1}%
    \def\tempb{student}%
    \ifx\tempa\tempb
        \setbool{arxiv}{true}%
    \else
        \def\tempb{teacher}%
        \ifx\tempa\tempb
            \setbool{arxiv}{false}%
        \else
            \errmessage{Unknown value for arxiv: #1}%
        \fi
    \fi
}
\begin{document}


\title{\LARGE \bf A Model-Based Approach to Imitation Learning through Multi-Step Predictions}

\author{Haldun Balim, Yang Hu, Yuyang Zhang, Na Li 
\thanks{Balim, Hu, Zhang, and Li are affiliated with the School of Engineering and Applied Sciences at Harvard University, Massachusetts, USA (e-mail: hbalim@fas.harvard.edu, yanghu@g.harvard.edu,yuyangzhang@g.harvard.edu, nali@seas.harvard.edu).}
\thanks{This work is funded by NSF AI institute: 2112085, NSF ECCS-2401390, and ONR N000142512173.}
}

\maketitle
\thispagestyle{empty}
\pagestyle{empty}


  \begin{abstract}
Imitation learning is a widely used approach for training agents to replicate expert behavior in complex decision-making tasks. However, existing methods often struggle with compounding errors and limited generalization, due to the inherent challenge of error correction and the distribution shift between training and deployment. In this paper, we present a novel model-based imitation learning framework inspired by model predictive control, which addresses these limitations by integrating predictive modeling through multi-step state predictions. Our method outperforms traditional behavior cloning numerical benchmarks, demonstrating superior robustness to distribution shift and measurement noise both in available data and during execution. Furthermore, we provide theoretical guarantees on the sample complexity and error bounds of our method, offering insights into its convergence properties. 
\end{abstract}

\section{Introduction}

Imitation learning (IL) is a machine learning approach where agents learn to perform tasks by replicating the behavior demonstrated by experts. It has become a widely used method for tackling complex decision-making problems, finding applications in areas such as autonomous driving~\cite{pan2017agile}, robotics~\cite{schaal1999imitation, fang2019survey}, industrial robotics~\cite{liu2022robot}. By leveraging expert demonstrations, IL enables agents to acquire skills efficiently, making it particularly well-suited for problems characterized by high-dimensional decision spaces and intricate dynamics~\cite{osa2018algorithmic}. With the increasing availability of large and diverse datasets, IL is gaining traction as a powerful approach to leveraging data-rich environments; see~\cite{hussein2017imitation} for an extensive review. Yet, IL remains challenging due to the temporal nature of decision-making, where actions may have long-term effects. In long-horizon tasks, small deviations from the expert's trajectory can compound, leading to significant errors and distribution shift~\cite{ross2010efficient}.

A widely used approach in imitation learning is \textit{Behavior Cloning} (BC)~\cite{pomerleau1988alvinn}, which trains a policy to map states to actions using supervised learning on expert demonstrations. While simple and intuitive, BC suffers from compounding errors—small deviations from the expert’s trajectory can accumulate, pushing the agent into unfamiliar states and degrading performance~\cite{ross2010efficient}. This issue stems from \textit{distribution shift}, where the agent encounters states during deployment that were underrepresented or absent in training. Due to the sequential nature of decision-making, even minor inaccuracies can drive the agent into unseen regions of the state space, further amplifying errors.

To mitigate distribution shift and compounding errors, various strategies have been explored. Online approaches like DAgger~\cite{ross2011reduction} iteratively expand the training data by incorporating the agent’s states and querying the expert, while DART~\cite{laskey2017dart} injects stochastic noise into demonstrations to enhance robustness. Though effective, these methods often require additional interactions with expert during training. To alleviate this issue, offline IL methods have been developed. GAIL~\cite{ho2016generative} avoids repeated expert queries by framing imitation learning as an adversarial game, but its training can be unstable and difficult to tune. Alternative approaches, like TASIL~\cite{pfrommer2022tasil}, aim to align higher-order derivatives of the learned policy with the expert’s but rely on access to expert gradients, which can be impractical. Recent works have explored different policy parameterizations to mitigate error accumulation.~\cite{foster2024behavior} proposes training policies with parameter sharing across time steps to mitigate error propagation. Similarly,~\cite{block2024provable} introduces low-level stabilizing controllers to counteract error accumulation induced by the learned policy, demonstrating that policies with short-horizon guarantees can still perform well over longer horizons. While these methods mitigate the effects of distribution shift, their inherently model-free nature limits their ability to anticipate long-term consequences and handle complex dynamics effectively. Without explicit system knowledge, they rely solely on observed data, making them reactive rather than predictive in managing evolving state distributions.

Beyond compounding errors and distribution shift, measurement noise poses a fundamental challenge in imitation learning. Sensor inaccuracies and environmental disturbances create discrepancies between the expert’s trajectory and observed data, degrading learning performance and exacerbating distribution shift.While recent methods such as DIDA~\cite{huang2024dida} and USN~\cite{yu2024usn} introduce techniques to improve robustness against noise, they do so without leveraging a system model, limiting their ability to separate noise from true dynamics. Integrating noise-mitigation strategies with model-based reasoning could provide a more principled way to handle measurement uncertainty, particularly in safety-critical applications where precision is essential.



 
Given the limitations of model-free imitation learning methods, incorporating explicit dynamics models into IL frameworks presents a promising direction. Early work on trajectory matching~\cite{englert2013model} and dynamics-based planning~\cite{hu2022model} improved long-term performance through rollouts with learned models. However, these rollout-based approaches are susceptible to compounding errors over extended horizons, leading to deviations from expert behavior. A predictive framework that continuously refines estimates can mitigate this issue by dynamically adapting to system uncertainties, ensuring more reliable long-term decision-making.



\textit{Contribution:}
This paper introduces a novel model-based imitation learning framework that mitigates compounding errors and measurement noise by integrating predictive modeling with multi-step state and control predictions. Our approach explicitly incorporates system dynamics and leverages learned multi-step predictors to model the closed-loop evolution of state trajectories under the learned policy, rather than relying on rollout-based trajectory optimization. This formulation alleviates computational challenges associated with trajectory alignment and dynamic constraints. Building on this foundation, we propose a surrogate optimization framework that enables a computationally efficient method for minimizing state trajectory discrepancies over finite horizons, enhancing both stability and scalability in imitation learning.


We provide a neural network-based implementation and establish theoretical guarantees, including sample complexity bounds and error convergence. Empirical evaluations on benchmark tasks demonstrate superior performance over existing baselines in both noiseless and noisy settings. These contributions offer a principled and scalable solution for improving imitation learning in complex and realistic scenarios.

\textit{Notation:} We denote a matrix's positive semi-definiteness as $ Q \succeq 0$ and define $ \|x\|_Q^2 = x^\top Q x$. A multivariate Gaussian vector $ x$ with mean $ \mu$ and covariance $ \Sigma$ is written as $ x \sim \mathcal{N}(\mu, \Sigma)$.

  \section{Problem setup and Preliminaries}


Consider a non-autonomous dynamical system given by:
\begin{equation}
    x_{t+1} = f(x_t, u_t),
\end{equation}
where $x_t$, $u_t$ denotes the system's state and control input at time $t$, respectively, and $f$ denotes the known system dynamics. We consider an expert control policy $ \pi^{\exp} $ that generates an expert trajectory $ \{x^{\exp}_t, u^{\exp}_t\}_{t=0}^{T} $, where  
\begin{equation}
    x_{t+1}^{\exp} = f(x_{t}^{\exp}, u_{t}^{\exp}),\ u^{\exp}_t = \pi^{\exp}(x^{\exp}_t).
\end{equation} 
However, the observed dataset is noisy, and we instead have access to the measurements:
\begin{align}
    y^{\exp}_t = x^{\exp}_t + \xi_t,\ v^{\exp}_t = \pi^{\exp}(x^{\exp}_t) + \eta_t,
\end{align}
where $\xi_t$ and $\eta_t$ stand for the measurement noise for states and inputs respectively that are drawn from arbitrary unknown distributions. Our objective is to learn an approximate policy $\hat \pi$ from the available data, $\{y^{\exp}_t, v^{\exp}_t\}_{t=0}^T$, such that the trajectory under $\hat \pi$ closely matches the evolution under the true policy $\pi^{\exp}$. 
Importantly, while the expert policy $\pi^{\exp} $ operates on the true states, the learned policy $\hat{\pi} $ must rely on noisy state measurements during execution. Consequently, the state evolution under $\hat{\pi} $ follows:
\begin{equation}
    \hat{x}_{t+1} = f(x_t, \hat{\pi}(y_t)), \quad \hat y_t  = \hat{x}_t + \xi_t.
\end{equation}
Accordingly, we aim to minimize the worst-case deviation between the trajectories evolving under the expert and learned policy over time:
\begin{align}\label{eq:im-gap}
    \hat{\pi} = \arg \min_{\hat{\pi}} \ &\E {\max_t \|x_t^{\exp} - \hat{x}_t\|}\ \textnormal{s.t. } x^{\exp}_0 = \hat x_0. 
\end{align}

Maximum trajectory discrepancy, as defined in Eq.~\eqref{eq:im-gap}, is a widely used performance metric in control and imitation learning~\cite{pfrommer2022tasil,block2024provable}, favoring learned policies that closely track expert trajectories. However, maintaining a bounded discrepancy over time is challenging due to error propagation and measurement noise. Small deviations from expert trajectories arising from learning inaccuracies and measurement noise can compound over time, resulting in a significant drift.

\section{Proposed Solution}




We begin with describing the standard BC approach and then outline a rollout-based approach that aims to minimize trajectory discrepancy over finite horizons to mitigate compounding errors. We analyze its limitations, focusing on computational challenges, and introduce a surrogate optimization framework that utilizes predictive models to address them. Finally, we discuss its neural network-based implementation and provide theoretical justifications.


\subsection{Behavior Cloning}\label{sec:bc}

BC is a supervised learning approach that learns a policy by imitating expert demonstrations. Given a dataset of expert trajectories with noisy observations, $ \{y_t^{\exp}, v_t^{\exp}\}_{t=0}^{T} $, BC trains a policy $ \pi_\theta$ by minimizing the discrepancy between the predicted and measured expert actions:
\begin{equation}
    \arg \min_{\pi_\theta} \sum_{t=0}^{T}  \| v^{\exp}_t - \pi_\theta(y^{\exp}_t)\|.
\end{equation}
While BC effectively mimics expert behavior in distribution, it suffers from compounding errors due to distribution shift. Additionally, measurement noise in the data further heightens these challenges.   

\subsection{Rollout-Based Imitation Learning}\label{sec:pil}

A natural approach is to learn a policy that directly minimizes Eq.~\eqref{eq:im-gap}. However, optimizing this metric over a long horizon is computationally demanding. Instead, a common approach is to optimize over a shorter finite horizon and re-solve the problem at each time step, providing a more tractable alternative to long-horizon optimization, an idea that has been long-adopted by Model-Predictive Control (MPC)~\cite{kouvaritakis2016model}. This motivates rollout-based imitation learning framework, defined as:

\begin{subequations}
\label{eq:rollout}
    \small
    \begin{align}
        \min_{\pi_{\theta}}\; & \sum_{t=0}^{T-H}\sum_{\tau=1}^{H}  \|\epsilon_{\mathrm{y}, t+\tau | t }\|^2_{Q}+ \|\epsilon_{\mathrm{v}, t+\tau | t }\|^2_{R} \label{eq:naive-loss}, \\
        \textnormal{s.t.}\; 
         & x_{t|t} = y_t^{\exp},\label{eq:rollout-init}\\
         & u_{t+\tau-1|t} = \pi_\theta (x_{t+\tau-1|t}), \label{eq:rollout-pi}\\
         & x_{t+\tau|t} = f(x_{t+\tau-1|t}, u_{t+\tau-1|t}), \label{eq:rollout-dyn}\\
         &\epsilon_{\mathrm{y}, t+\tau | t} = y_{t+\tau}^{\exp} -  x_{t+\tau|t}, \label{eq:rollout-xerr}\\
         &\epsilon_{\mathrm{v}, t+\tau | t} = v_{t+\tau-1}^{\exp} -  u_{t+\tau-1|t}\label{eq:rollout-uerr}.
    \end{align}
\end{subequations}

We denote the predicted state and control input at time $t+\tau $, given information at time $t $, as $x_{t+\tau|t} $ and $u_{t+\tau|t} $, respectively. These predictions are iteratively computed over the planning horizon using the system dynamics~\eqref{eq:rollout-dyn} and the policy~\eqref{eq:rollout-pi}, following a receding horizon approach starting from the current measurement~\eqref{eq:rollout-init}. The objective in \eqref{eq:naive-loss} minimizes two types of errors over a finite horizon $H$: the state error, $\epsilon_{\mathrm{y}, t+\tau | t}$ , measuring the discrepancy between the predicted states and expert states~\eqref{eq:rollout-xerr}, and the control error, $\epsilon_{\mathrm{v}, t+\tau | t}$, capturing the deviation between predicted control inputs and expert controls~\eqref{eq:rollout-uerr}, weighed by $Q, R \succeq 0$ matrices respectively. The choice of $H$ balances computational efficiency and accuracy: shorter horizons simplify optimization by reducing sequential dependencies, while longer horizons improve decision-making at the cost of increased computation.


This formulation explicitly aligns the learned policy with expert trajectories while incorporating system dynamics and finite-horizon planning, making it robust to compounding errors and distribution shifts. However, solving this problem is \textit{computationally demanding} due to the trajectory-level alignment and dynamic constraints. Explicitly minimizing this objective requires access to the gradients of the system dynamics, which may not always be readily available. 

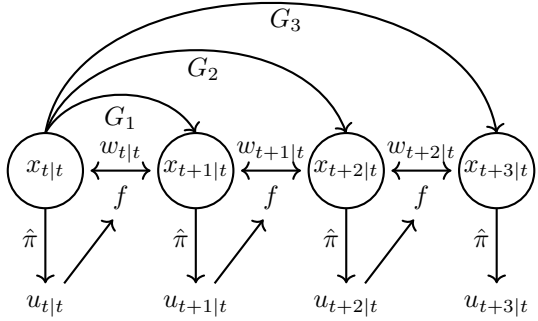
\begin{figure}[t]
\centering
\begin{tikzpicture}

   \draw[thick] (0,0) circle(0.5);
    \node at (0,0) {$x_{t|t}$};

    \draw[thick] (2,0) circle(0.5);
    \node at (2,0) {$x_{t+1|t}$};

    \draw[thick] (4,0) circle(0.5);
    \node at (4,0) {$x_{t+2|t}$};

    \draw[thick] (6,0) circle(0.5);
    \node at (6,0) {$x_{t+3|t}$};

    \draw[thick,->] (0,0.5) to[out=60,in=120] (2,0.5);
    \node at (1, 0.7) {$G_1$};
    \draw[thick,->] (0,0.5) to[out=70,in=110] (4,0.5);
    \node at (2.1, 1.35) {$G_2$};
    \draw[thick,->] (0,0.5) to[out=80,in=100] (6,0.5);
    \node at (3.2, 2.0) {$G_3$};

    \draw[thick,<->] (0.6,0) -- (1.4,0);
    \node at (1.0, 0.25) {$w_{t|t}$};
    \node at (1.0, -0.35) {$f$};
    \draw[thick,<->] (2.6,0) -- (3.4,0);
    \node at (3.0, 0.25) {$w_{t+1|t}$};
    \node at (3.0, -0.35) {$f$};
    \draw[thick,<->] (4.6,0) -- (5.4,0);
    \node at (5.0, 0.25) {$w_{t+2|t}$};
    \node at (5.0, -0.35) {$f$};

    \draw[thick,->] (0,-0.5) -- (0,-1.5);
    \node at (-0.2,-0.9) {$\hat \pi$};
    \node at (0,-1.8) {$u_{t|t}$};
    \draw[thick,->] (0.25,-1.5) -- (0.9,-0.65);

    \draw[thick,->] (2,-0.5) -- (2,-1.5);
    \node at (1.8,-0.9) {$\hat \pi$};
    \node at (2,-1.8) {$u_{t+1|t}$};
    \draw[thick,->] (2.25,-1.5) -- (2.9,-0.65);

    \draw[thick,->] (4,-0.5) -- (4,-1.5);
    \node at (3.8,-0.9) {$\hat \pi$};
    \node at (4,-1.8) {$u_{t+2|t}$};
    \draw[thick,->] (4.25,-1.5) -- (4.9,-0.65);

    \draw[thick,->] (6,-0.5) -- (6,-1.5);
    \node at (5.8,-0.9) {$\hat \pi$};
    \node at (6,-1.8) {$u_{t+3|t}$};


\end{tikzpicture}
\caption{Illustration of the Predictive Imitation Learning~\eqref{eq:mpc-proposed} for horizon $H=3$.}
\label{fig:framework}
\end{figure}

\subsection{Predictive Imitation Learning}



In the following, we present our proposed surrogate optimization framework Predictive Imitation Learning (PIL), designed to address the limitations discussed in Sec.~\ref{sec:pil}. The core idea is to leverage parametrized multi-step predictors $G_\vartheta:=\left\{G_{1,\vartheta},G_{2,\vartheta},\ldots, G_{H, \vartheta}\right\}$  to compute intermediate states across the optimization horizon, bypassing the need to explicitly unroll the dynamics at every time step. Here, $G_{\tau, \vartheta}$ represents $\tau$-step ahead predictors that model the evolution of states under learned policy. These predictors provide an efficient mechanism for approximating state transitions, reducing the computational burden associated with traditional methods. The optimization objective minimizes the discrepancy between the predicted states and inputs and the available expert data over the prediction horizon, ensuring the trajectory remains close to the observed behavior. Crucially, we incorporate a consistency loss term that encodes the system's dynamics into the optimization process, ensuring that the predicted input-state trajectory aligns with the true dynamics $f$. This formulation not only alleviates computational overhead but also enhances trajectory accuracy by effectively incorporating system knowledge into the optimization problem.

\begin{subequations}
\label{eq:mpc-proposed}
    \small
    \begin{align}
        \min_{\pi_{\theta}, G_{\vartheta}}\; & \sum_{t=0}^{T-H}\sum_{\tau=1}^{H} \|\epsilon_{\mathrm{y}, t+\tau | t }\|^2_{Q}+ \|\epsilon_{\mathrm{v}, t+\tau | t }\|^2_{R} + \|w_{t+\tau|t}\|^2_{P} \label{eq:prop-loss}, \\
        \textnormal{s.t.}\; 
         & x_{t|t} = y_t^{\exp},\\
         & u_{t+\tau-1|t} = \pi_\theta (x_{t+\tau-1|t}),\\
         & x_{t+\tau|t} = G_{\tau, \vartheta} (x_{t|t}), \label{eq:pil-msp}\\
         & w_{t+\tau-1|t} = x_{t+\tau|t} - f(x_{t+\tau-1|t}, u_{t+\tau-1|t}),\label{eq:pil-consistency}\\
         &\epsilon_{\mathrm{y}, t+\tau | t} = y_{t+\tau}^{\exp} -  x_{t+\tau|t}, \\
         &\epsilon_{\mathrm{v}, t+\tau | t} = v_{t+\tau-1}^{\exp} -  u_{t+\tau-1|t}.
    \end{align}    
\end{subequations}

The proposed learning framework follows a similar strategy to~\ref{sec:pil}, employing parameterized multi-step predictors to efficiently approximate state transitions and reduce the computational burden typically associated with unrolling system dynamics. Instead of directly enforcing the dynamics constraints in eq.\eqref{eq:rollout-dyn}, our approach leverages multi-step predictors\eqref{eq:pil-msp} and a consistency term~\eqref{eq:pil-consistency} to align predicted trajectories with true system dynamics. By integrating system knowledge through this consistency loss, the method constructs a predicted state trajectory that bridges the expert's trajectory evolution with that of the learned policy. Addressing challenges such as compounding errors and distribution shifts through horizon-aware optimization, this approach effectively captures the foresight of predictive modeling, enhancing generalization across extended trajectories.

\subsection{Implementation}

The proposed algorithm is implemented by parameterizing its components with learnable functions to facilitate efficient training and generalization. We denote the policy parameters by $\theta$ and the parameters of the multi-step predictors by $\vartheta$. The key components and design choices are summarized below:

\begin{itemize}
    \item \textit{Multi-step Predictor Parameterization:} The initial observation $y_t$ is encoded into a latent representation $z_t = \mathrm{Encode}_{\vartheta_1}(y_t)$ using an encoder $\mathrm{Encode}_{\vartheta_1}(\cdot)$, which extracts relevant features from the measurement. This latent representation is shared across a set of multi-step predictors $\{G_{\tau, \vartheta_2}\}$, promoting consistency and reducing redundancy in learning. Each predictor takes $z_t$ as input and produces a future state prediction $x_{t+\tau|t} = G_{\tau, \vartheta_2}(z_t)$. The predictors are thus jointly parameterized by the encoder and the predictor components, i.e., $\vartheta_1$ and $\vartheta_2$.
    
    \item \textit{Policy Parameterization:} Control inputs $u_{t+\tau|t}$ are generated by a paremetrized policy $\pi_{\theta}(\cdot)$, which maps predicted states to control actions: $u_{t+\tau|t} = \pi_{\theta}(x_{t+\tau|t})$.
    
    \item \textit{Training Objective:} The parameters $(\theta, \vartheta_1, \vartheta_2)$ are trained jointly by minimizing the loss defined in Eq.~\eqref{eq:prop-loss}, which encourages alignment between predicted trajectories and expert demonstrations while penalizing residual dynamics errors.

\end{itemize}

This modular design allows scalable and efficient training, making it well-suited for high-dimensional and complex control tasks. The encoder can be interpreted as a form of filtering or denoising that produces a compact and informative representation of the measurement $y_t$ for downstream decision-making.

  \section{Performance Guarantees}

In this section, we provide a few refined theoretical results for the special case of LTI systems, where $f(x_t, u_t) = Ax_t + Bu_t$ ($x_t \in \R^n$, $u_t \in \R^m$) subject to Gaussian distrubed measurement noise: $\xi_t\sim\mathcal{N}(0, \Sigma_\xi),\ \eta_t\sim\mathcal{N}(0, \Sigma_\eta)$. The control policies are restricted to linear feedback policies, i.e. $\pi^{\exp}(x_t) = \Ks x_t$ and $\hat{\pi}(x_t) = \hK x_t$. For a detailed explanation of the setup, please refer to~\cite{onlinereport}.

We make the following assumption on data coverage.

\begin{assumption}\label{assum:data}
  There exists a constant $\phi_x$, such that
  \begin{equation}
    \sum_{t=0}^{T-H} x_t^{\exp} x_t^{\exp,\top} \succeq \phi_x (T-H+1) I.        
  \end{equation}
\end{assumption}

The following theorem characterizes the error of the proposed MS-PIL controller in linear systems.

\begin{theorem}[learning error, \textit{sketched}]\label{thm:error}
  Under \Cref{assum:data}, for any $T \geq O(n \log (1/\delta))$, with probability at least $1-\delta$ we have
  \begin{equation*}
    \norm{\hKMS-\Ks} = \frac{\kappa_1}{\sqrt{T-H+1}} + \kappa_2 \norm{\varSigma_{\xi}},
  \end{equation*}
  where $\kappa_1, \kappa_2$ are system-specific constants.
\end{theorem}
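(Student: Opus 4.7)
First I would specialize the PIL program~\eqref{eq:mpc-proposed} to the LTI setting. Since $f$, $\pi^{\exp}$, and $\hat\pi$ are all linear, the natural parametrization for the $\tau$-step predictors is $G_{\tau,\vartheta}(x)=M_\tau x$. The consistency term $\|w_{t+\tau|t}\|_P^2$ then becomes $\|(M_\tau-(A+B\hat K)M_{\tau-1})x_{t|t}\|_P^2$, which (after noting that in steady state the Gram matrix $\sum_t x_{t|t} x_{t|t}^\top$ is positive definite by \Cref{assum:data}) forces $M_\tau = (A+B\hat K)^\tau$ up to a term that is negligible compared to the statistical fluctuations. I would therefore reduce \eqref{eq:mpc-proposed} to the constrained program of minimizing
\[
 \sum_{t=0}^{T-H}\sum_{\tau=1}^H \bigl\|y_{t+\tau}^{\exp}-(A+B\hat K)^\tau y_t^{\exp}\bigr\|_Q^2 + \bigl\|v_{t+\tau-1}^{\exp}-\hat K (A+B\hat K)^{\tau-1} y_t^{\exp}\bigr\|_R^2
\]
over $\hat K$, which is a finite-dimensional least-squares problem.

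Next I would derive a closed-form first-order optimality condition. Linearizing around $K^\star$ by writing $\hat K = K^\star + \Delta K$ and using $v_{t}^{\exp} = K^\star x_t^{\exp} + \eta_t$, $y_t^{\exp}=x_t^{\exp}+\xi_t$, the normal equation takes the schematic form
\begin{equation*}
 \Bigl(\underbrace{\tfrac{1}{T-H+1}\sum_{t} y_t^{\exp} y_t^{\exp,\top}}_{=:\widehat\varPhi}\Bigr) \Delta K^\top = \widehat E,
\end{equation*}
where $\widehat E$ is a matrix whose entries are sums over $t$ of products of the noise sequences $\xi_t,\eta_t$ with the (noise-free) $x_t^{\exp}$ trajectory, together with quadratic-in-$\xi$ terms. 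The key algebraic step is to extract the dominant contributions to $\widehat E$ and $\widehat\varPhi$ cleanly; this is where the horizon $H$ enters only through constants inside $\kappa_1,\kappa_2$.

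Then I would do the concentration analysis. By \Cref{assum:data} and standard sub-Gaussian concentration for empirical covariances of Gaussians, $\widehat\varPhi \succeq \tfrac12 \phi_x I$ with probability $\geq 1-\delta$ once $T-H+1 \geq c\,n\log(1/\delta)$, which gives the sample-complexity threshold in the statement. The cross terms $\tfrac{1}{T-H+1}\sum_t x_t^{\exp}\xi_t^\top$ and $\sum_t x_t^{\exp}\eta_t^\top$ are mean-zero sub-Gaussian and concentrate at rate $1/\sqrt{T-H+1}$ by a Hanson--Wright / matrix Bernstein bound, producing the $\kappa_1/\sqrt{T-H+1}$ contribution. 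The remaining pieces are the quadratic-in-$\xi$ terms whose \emph{expectation} is a multiple of $\varSigma_\xi$ (this is the errors-in-variables bias arising because the regressor is $y_t^{\exp}$ instead of $x_t^{\exp}$); their deterministic part, after inversion by $\widehat\varPhi^{-1}$, contributes the irreducible bias term $\kappa_2 \|\varSigma_\xi\|$, while their fluctuation again scales like $1/\sqrt{T-H+1}$ and is absorbed into $\kappa_1$.

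The main obstacle I foresee is not the concentration itself but the decoupling of the jointly optimized multi-step predictors $\{M_\tau\}$ from $\hat K$: one must argue that the optimal $\{M_\tau\}$ can, with high probability, be replaced by $(A+B\hat K)^\tau$ up to an error small enough that it does not inflate $\kappa_1,\kappa_2$. I would handle this by showing that the consistency penalty $\|w\|_P^2$ forces $M_\tau-(A+B\hat K)M_{\tau-1}$ to lie in the null space of $\widehat\varPhi$, which is empty once the high-probability event for $\widehat\varPhi\succ 0$ holds, and then folding the resulting residuals into the system-specific constants $\kappa_1,\kappa_2$ that depend on $\|A\|,\|B\|,\|K^\star\|,\phi_x,Q,R,P,H$ and the closed-loop spectral radius of $A+BK^\star$.
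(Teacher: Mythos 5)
Your concentration plan (split the error into a noise-times-state term and an $\eta_t\xi_t^\top$ cross term, each concentrating at rate $1/\sqrt{T-H+1}$, plus a quadratic-in-$\xi$ errors-in-variables bias yielding $\kappa_2\norm{\varSigma_\xi}$, with the Gram matrix lower-bounded via \Cref{assum:data} to get the $n\log(1/\delta)$ threshold) is essentially the paper's argument. The genuine gap is in the reduction step that precedes it: you claim the consistency penalty forces $M_\tau=(A+B\hK)M_{\tau-1}$ whenever $\sum_t y_ty_t^\top\succ 0$, because the residual must "lie in the null space" of the Gram matrix. That is false for the actual objective: the consistency term enters with a finite weight $P$, so the stationarity condition in $M_\tau$ balances the $Q$-weighted data misfit $\sum_t(M_\tau y_t-y_{t+\tau})y_t^\top$ against the $P$-weighted consistency residuals (including a coupling to $M_{\tau+1}$), and the optimal $M_\tau$ generically has $w\neq 0$ even on the event that the Gram matrix is invertible; the reduction only holds in the hard-constraint limit $P\to\infty$, which the paper invokes for the $H=1$ comparison theorem but not here. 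Moreover, even granting $M_\tau=(A+B\hK)^\tau$, the reduced problem you write is a rollout objective, polynomial of degree up to $2H$ in $\hK$, not a least-squares problem: the schematic normal equation with a single Gram matrix, and the linearization $\hK=\Ks+\Delta K$ that discards higher-order terms in $\Delta K$, do not follow without a separate localization/uniqueness argument, and the claim that $H$ enters only through constants would have to be proved there.

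The paper avoids this entirely by analyzing a plug-in estimator: the predictors are held fixed and taken to satisfy the ground-truth recursion $G_{\tau}=(A+B\Ks)G_{\tau-1}$ (consistent with the implementation, where the $G_\tau$ are fit by a separate least squares and then frozen), and the gradient is taken in $K$ alone. This yields the exact closed form
\begin{equation*}
  \hK \;=\; \Ks + (R+B^{\top}PB)^{-1}\Bigl(\sum_{t,\tau} R\,(\eta_t-\Ks G_{\tau-1}\xi_t)\,(G_{\tau-1}y_t)^{\top}\Bigr)\Bigl(\sum_{t,\tau}(G_{\tau-1}y_t)(G_{\tau-1}y_t)^{\top}\Bigr)^{-1},
\end{equation*}
after which your three-part bound goes through verbatim (numerator terms at rate $\sqrt{T-H+1}$ plus a $\psi_\xi(T-H+1)$ bias from $\sum_t\xi_t\xi_t^\top$, denominator $\succeq \tfrac{\phi_x+\phi_\xi}{2}(T-H+1)I$ with high probability). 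To rescue your version with jointly optimized $\{M_\tau\}$, you would need either to work in the $P\to\infty$ limit and control the resulting nonconvex rollout problem, or to bound explicitly how far the optimal $M_\tau$ deviates from $(A+B\Ks)^\tau$ and propagate that perturbation into $\kappa_1,\kappa_2$; the null-space argument as stated does neither.
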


\begin{proof}[Proof Sketch]
We begin by we quantifying the estimation error of $\hK$ (\ifbool{arxiv}{ Lemma~\ref{lem:opt}}{Appendix A Lemma 4}) which is of the form of a ratio of two terms. Then, the proof follows these key steps:
\begin{enumerate}
    \item \textit{Bounding the numerator:} The terms in numerator consists of noise-related components. Applying concentration inequalities, we show that its' norm scales as $O(\sqrt{T-H+1}) + O(\psi_\xi)$.

    \item \textit{Bounding the denominator:} The denominator is shown to be well-conditioned with high probability using matrix concentration bounds. This ensures its inverse scales as $O(1/(T-H+1))$.

    \item \textit{Combining the bounds:} Multiplying the bounds on the numerator and denominator, we obtain the sample complexity rate.
\end{enumerate}
Due to space constraints, the detailed proof is included in~\ifbool{arxiv}{Appendix B}{Appendix B of the online report~\cite{onlinereport}}.
\end{proof}

The above theorem decomposes the learning error into two terms: the first term is a time-decaying term that illustrate the advantage of multi-step prediction with longer horizon $T$, which provides more data, and the second term indicates that the quality of the controller is impacted by the accuracy of state observations.

We now proceed to explicitly compare the policies obtained from different methods. However, such comparison turns out to be hard if multi-step rolling-out is involved. Hence for the sake of clarity, we temporarily restrict our scope to the special case with 1 step prediction, i.e., $H = 1$. In this case,  the following theorem compares the PIL controller $\hKMS$ against behavior cloning linear controller $\hKBC$, in terms of their ``typical'' distances from the expert controller $\Ks$.

\begin{theorem}[comparison of $\hKMS$ against $\hKBC$, \textit{sketched}]\label{thm:comparison}
  Under \Cref{assum:data}, suppose $H=1$ and $G_1$ is fixed to be $G_1 = A + BK$, the distances $\norm{\hKMS - \Ks}$ and $\norm{\hKBC - \Ks}$ shall be upper bounded by
  \begin{align*}
    \norm{\hKMS - \Ks} &\leq \alpha + \beta \norm{\omegaPIL},\\
    \norm{\hKBC - \Ks} &\leq \alpha + \beta \norm{\omegaBC},
  \end{align*}
  where $\alpha, \beta$ are system-specific constants. Moreover, when
  \begin{equation*}
    \tr(\varSigma_{\xi}) \leq C \tr(\varSigma_{\eta})
  \end{equation*}
  for some system-specific constant $C$, we have
  \begin{equation*}
    \E{\norm{\omegaPIL}} \leq \E{\norm{\omegaBC}}.  
  \end{equation*}
\end{theorem}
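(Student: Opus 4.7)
The plan is to exploit the closed-form least-squares solutions available in the linear-Gaussian setting. Behavior cloning gives $\hKBC = \bigl(\sum_t v_t^{\exp}(y_t^{\exp})^\top\bigr) Y^{-1}$ with $Y := \sum_t y_t^{\exp}(y_t^{\exp})^\top$. Substituting $v_t^{\exp} = \Ks x_t^{\exp} + \eta_t$ and $y_t^{\exp} = x_t^{\exp} + \xi_t$ decomposes $\hKBC - \Ks$ into four summands: two mean-zero cross terms $\sum_t \eta_t (x_t^{\exp})^\top$ and $-\Ks \sum_t \xi_t (x_t^{\exp})^\top$, the mean-zero product $\sum_t \eta_t \xi_t^\top$, and the errors-in-variables term $-\Ks \sum_t \xi_t \xi_t^\top$, whose mean is the deterministic bias $-(T-H+1)\Ks \varSigma_\xi$. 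Invoking Assumption~\ref{assum:data} and matrix concentration to control $Y^{-1}$ and the fluctuation of $\sum_t \xi_t \xi_t^\top$ around its mean yields a bound of the form $\alpha + \beta \|\omegaBC\|$, where $\alpha$ absorbs the errors-in-variables bias and $\omegaBC$ collects the mean-zero noise matrices.

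For $\hKMS$, fixing $G_1 = A + B\Ks$ makes $\epsilon_{\mathrm{y},t+1|t} = y_{t+1}^{\exp} - (A+B\Ks)y_t^{\exp}$ independent of $K$, and reduces the consistency residual to $w_{t|t} = B(\Ks - K)y_t^{\exp}$. Setting the derivative of the remaining quadratic loss in $K$ to zero gives the exact identity
\begin{equation*}
(R + B^\top P B)(\hKMS - \Ks) = R(\hKBC - \Ks),
\end{equation*}
so $\hKMS$ is a deterministic linear shrinkage of $\hKBC$ toward $\Ks$ via the operator $M := (R + B^\top P B)^{-1} R$, whose eigenvalues lie in $(0,1]$. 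Pushing this identity through the BC decomposition produces the matching bound $\alpha + \beta \|\omegaPIL\|$, with $\omegaPIL$ defined as the image under $M$ of the BC noise residual; absorbing $M$ into $\omegaPIL$ (rather than into $\beta$) is what keeps $\alpha$ and $\beta$ common to both estimators.

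The final step compares $\E{\|\omegaPIL\|}$ and $\E{\|\omegaBC\|}$ via a second-moment calculation. Because $\xi_t$ and $\eta_t$ are independent zero-mean Gaussians, Isserlis-type identities reduce both expected squared norms to affine combinations of $\tr(\varSigma_\eta)$ and $\|\Ks\|^2 \tr(\varSigma_\xi)$, weighted by data-dependent factors such as $\tr\bigl(\sum_t x_t^{\exp}(x_t^{\exp})^\top\bigr)$ and the singular values of $M$. The hypothesis $\tr(\varSigma_\xi) \leq C \tr(\varSigma_\eta)$, with $C$ determined by $\|\Ks\|$ and the spectrum of $M$, ensures the shrinkage through $M$ dominates any worsening of the $\Ks$-free terms, so $\E{\|\omegaPIL\|^2} \leq \E{\|\omegaBC\|^2}$ and Jensen's inequality delivers the conclusion. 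The principal obstacle, in my view, is the bookkeeping required to make the \emph{same} $\alpha$ and $\beta$ serve both bounds: one must absorb the spectral action of $M$ into $\omegaPIL$ rather than into the constants, and control $\|Y^{-1}\|$ uniformly across the two estimators via Assumption~\ref{assum:data}.
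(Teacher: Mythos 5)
There is a genuine gap, and it comes from how you read the hypothesis ``$G_1$ is fixed to be $G_1 = A+BK$.'' In the paper's proof, $K$ here is the \emph{decision variable}: the consistency penalty is enforced exactly (intuitively $P \to \infty$), the residual $w$ vanishes, and the surviving objective is $\sum_t \|y_{t+1}-(A+BK)y_t\|_Q^2 + \|v_t - K y_t\|_R^2$, so the $Q$-weighted state-prediction term depends on $K$ and feeds transition information into $\hKMS$. You instead fix $G_1 = A+B\Ks$ (which, incidentally, requires the unknown expert gain) and keep a finite $P$-weighted residual $w_{t|t}=B(\Ks-K)y_t$. Your stationarity identity $(R+B^{\top}PB)(\hKMS-\Ks)=R(\hKBC-\Ks)$ is algebraically correct \emph{for that estimator}, but it is not the estimator of the theorem: under your reading $Q$ never enters $\hKMS$ at all, whereas in the paper the first-order condition gives
\begin{equation*}
\hKMS-\Ks=(B^{\top}QB+R)^{-1}\sum_{t}\Bigl(B^{\top}Q\xi_{t+1}-B^{\top}QA\xi_t+R\eta_t-(B^{\top}QB+R)\Ks\xi_t\Bigr)y_t^{\top}\Bigl(\sum_t y_ty_t^{\top}\Bigr)^{-1},
\end{equation*}
while $\hKBC-\Ks=\sum_t(\eta_t-\Ks\xi_t)y_t^{\top}\bigl(\sum_t y_ty_t^{\top}\bigr)^{-1}$, rewritten with the same prefactor $(B^{\top}QB+R)^{-1}$. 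The shared piece $R\eta_t-(B^{\top}QB+R)\Ks\xi_t$ is what becomes $\alpha$; the method-specific pieces are $\omegaPIL \sim B^{\top}Q(\xi_{t+1}-A\xi_t)$ and $\omegaBC \sim B^{\top}QB\,\eta_t$, and the comparison reduces to $\norm{B^{\top}Q(I-A)}\sqrt{\tr(\varSigma_{\xi})}\le\norm{B^{\top}QB}\sqrt{\tr(\varSigma_{\eta})}$, which is exactly where the system-specific constant $C$ comes from. None of this mechanism appears in your argument.

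The internal symptom that something is off is your own final step: if $\omegaPIL = M\,\omegaBC$ with $M=(R+B^{\top}PB)^{-1}R$ and $\|M\|\le 1$, then $\E{\norm{\omegaPIL}}\le\E{\norm{\omegaBC}}$ holds pointwise and unconditionally, so the hypothesis $\tr(\varSigma_{\xi})\le C\tr(\varSigma_{\eta})$ would be vacuous and the Isserlis/Jensen trade-off you describe has nothing left to balance. A result in which PIL dominates BC regardless of how noisy the state measurements are contradicts the intended content of the theorem (and the paper's closing remark that incorporating state information helps precisely when input noise dominates state noise). Your BC decomposition and the use of Assumption~\ref{assum:data} with matrix concentration to control $\bigl(\sum_t y_ty_t^{\top}\bigr)^{-1}$ are fine and parallel the paper; to repair the proof you need to redo the $\hKMS$ side with $G_1$ tied to the optimization variable, so that the $Q$-term contributes $B^{\top}Q(\xi_{t+1}-A\xi_t)$ to the noise and the stated trace condition emerges.
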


\begin{proof}[Proof Sketch]
For $H=1$, the learned controllers minimize their respective loss functions, where $\hKMS$ incorporates both state transitions and input observations, while $\hKBC$ relies only on input observations. Taking the gradient and solving for the optimal controllers, both solutions can be expressed in terms of system matrices and noise terms.

The error in $\hK$ and $\hKBC$ is determined by these noise terms, denoted as $\omegaPIL$ and $\omegaBC$. Using matrix concentration bounds, we establish upper bounds on their norms, showing that both errors are bounded by a system-dependent constant plus a term proportional to the respective noise magnitude.

Finally, under the assumption that state observation noise is relatively small compared to input noise, the expected magnitude of $\omegaPIL$ is shown to be smaller than that of $\omegaBC$. This implies that $\hKMS$ achieves a lower expected error than $\hKBC$, highlighting the advantage of incorporating state transition information. Due to the space limit, the detailed proof is provided in~\ifbool{arxiv}{Appendix C}{Appendix C of the online report~\cite{onlinereport}}.
\end{proof}

While our theoretical analysis focuses on bounding the policy estimation error $ \|\hKMS - \Ks\| $, our primary evaluation metric in \eqref{eq:im-gap} is defined in terms of state trajectory discrepancies. This connection is particularly straightforward in linear systems with expert policies of known parameterization, where the state evolution is explicitly governed by the policy through the system dynamics. In such cases, a lower policy estimation error directly translates to reduced deviation between the learned and expert trajectories. Thus, our theoretical results provide guarantees that align with and support improvements in the imitation gap metric.


  \section{Experiments}
This section presents numerical results comparing PIL, rollout-based method and BC across various settings. The implementation for the experiments is available at~\url{https://github.com/haldunbalim/PredictiveIL}. For detailed experimental setups, see~\ifbool{arxiv}{Appendix~\ref{ssec:details}}{Appendix E in the online version~\cite{onlinereport}}.

\subsection{Linear System with Linear Expert}
In this section we consider the following linear system:
\begin{align}\label{eq:example-lin}
    A = \begin{bmatrix}
        0.95 & 0.05 \\ 0 & 0.95
    \end{bmatrix},\ B = \begin{bmatrix}
        0 \\ 0.05
    \end{bmatrix},\\
    \xi_t\sim\mathcal{N}(0, \Sigma_\xi),\ \eta_t\sim\mathcal{N}(0, \Sigma_\eta)\notag
\end{align}


The expert policy is the optimal policy $u = K^* x$ for minimizing quadratic cost defined by state cost matrix $I$ and an input cost matrix $10^{-2}I $. The expert generates $50$ training trajectories with $x^{\exp}_0 \sim \mathcal{N}(0, I)$, each spanning $100$ time steps. We compare BC with our proposed approach for different state and action measurement noise levels. The expert’s state evolution follows the transition matrix $A + BK^* $, making the $\tau $-step-ahead predictors of the form $(A + BK^*)^\tau $. In our method, these predictors are estimated via least squares and used to compute the feedback gain through our proposed objective. Note that, since we use fixed multi-step predictors, the loss due to state errors remains constant. We estimate the maximum trajectory discrepancy~\eqref{eq:im-gap} for both methods using $1000$ new test trajectories. As shown in Figure~\ref{fig:lin-sys}, our approach achieves superior long-horizon performance under high state noise and demonstrates a slight advantage over BC in high input noise settings.




\begin{figure}[t]
\vskip 0.2in
\begin{center}
\centerline{\includegraphics[width=\columnwidth]{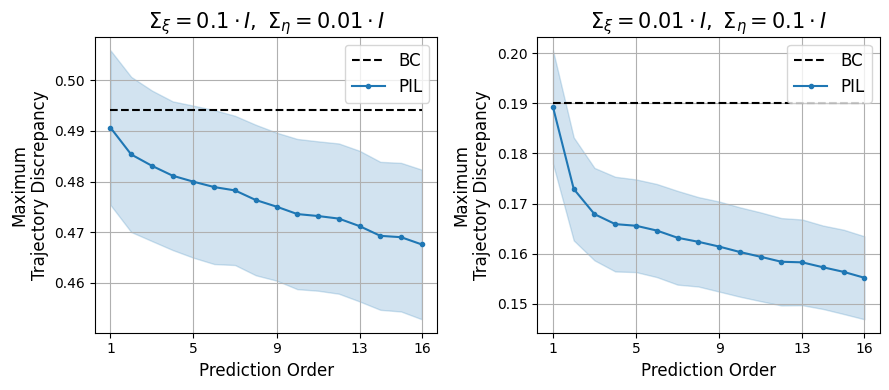}}
\caption{Maximum trajectory discrepancy~\eqref{eq:im-gap} relative to 
$\hKBC$, for varying prediction orders under high state measurement noise (left) and high action measurement noise (right). Results are averaged over $100$ seeds, with the shaded area indicating half standard deviation.}
    \label{fig:lin-sys}
\end{center}
\vskip -0.2in
\end{figure}

\begin{figure}[t]
\vskip 0.2in
\begin{center}
\centerline{\includegraphics[width=0.8\columnwidth]{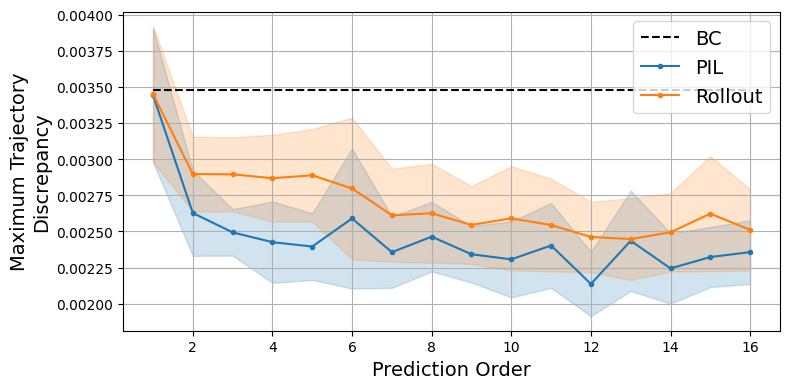}}
\caption{Maximum trajectory discrepancy~\eqref{eq:im-gap} for BC, Rollout-based approach and PIL, for varying prediction orders. Results are averaged over $50$ seeds, with the shaded area indicating half standard deviation.}
    \label{fig:pred-order}
\end{center}
\vskip -0.2in
\end{figure}

\subsection{Linear System with Nonlinear Expert}

We analyze the linear system in Eq.~\ref{eq:example-lin}, driven by a randomly initialized multi-layer perceptron. We aim to track the state evolution using both PIL and rollout-based strategies. We evaluate the maximum trajectory discrepancy across prediction horizons using $100$ test trajectories. The experiment is repeated with $10$ different seeds, and the average results are presented in Figure~\ref{fig:pred-order}, with half-standard deviations indicated. PIL consistently outperforms the other methods across all prediction orders. Furthermore, increasing the prediction order up to a threshold reduces trajectory discrepancy.

\begin{figure*}[t]
    \vskip 0.2in
    \begin{center}
        
    \includegraphics[width=2\columnwidth]{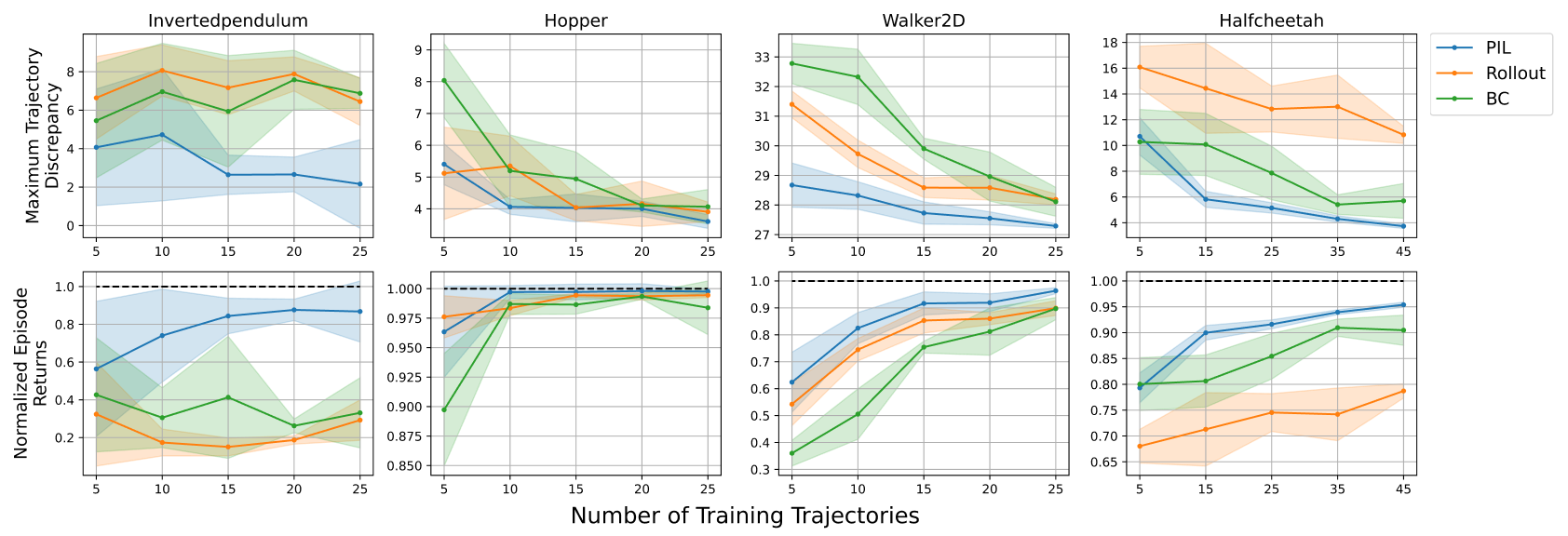}\\    
    \caption{Maximum trajectory discrepancy~\eqref{eq:im-gap} and episode returns (normalized by expert returns) for BC, PIL, and rollout-based approach across different training trajectory counts. Results are averaged over 5 seeds, with shaded regions indicating half a standard deviation.}
     \label{fig:mujoco}

    \end{center}
    \vskip -0.2in
\end{figure*}

\subsection{Influence of Dynamics Gradient}

In this experiment, we assess the significance of dynamics gradients. For this, we compare imitation learning algorithms on an inverted pendulum task with and without measurement noise using $50$ training trajectories of length $100$. PIL and rollout-based method are trained both with and without access to dynamics gradients. Maximum trajectory discrepancy is computed over $1000$ test trajectories, averaged across $5$ seeds (Table~\ref{tab:pendulum}). The proposed method consistently outperforms baselines, showing greater robustness to noise. Gradient-based methods perform slightly better in the noiseless setting, but even without gradients, the proposed approach remains competitive. Rollout-based method improves over BC but is surpassed by PIL. Noise increases discrepancy for all methods, yet the proposed approach exhibits the highest robustness.

\begin{table}[h]
\caption{Maximum trajectory discrepancy for BC, rollout-based, and PIL methods on the Pendulum task. Rollout-based method and PIL are trained both with and without access to the dynamics function's gradients.}
\vskip 0.15in
\begin{center}
\begin{small}
\begin{sc}
\label{tab:pendulum}
\begin{tabular}{lcc}
\toprule
             & No Noise       & With Noise     \\
\midrule
BC           & 0.192          & 0.262          \\
Rollout           & 0.188          & 0.231          \\
Rollout (No Gradient)           &  \underline{0.179}       & 0.234          \\
PIL     & \textbf{0.167} & {\underline {0.213}}    \\
PIL (No Gradient) & {\underline{0.179}}    & \textbf{0.203}\\
\bottomrule
\end{tabular}
\end{sc}
\end{small}
\end{center}
\vskip -0.1in
\end{table}

\subsection{MuJoCo Experiments}

Finally, we evaluate BC, rollout-based, and PIL on several standard continuous control tasks in MuJoCo. The environments considered, along with their respective (state, action) dimensions, are: \texttt{InvertedPendulum-v4} $(4,1)$, \texttt{Hopper-v4} $(11,3)$, \texttt{Walker2d-v4} $(17,6)$, and \texttt{HalfCheetah-v4} $(17,6)$. We provide custom implementations of these environments and utilize the \textit{implicitfast} integrator from the MuJoCo toolbox. Expert demonstrations are generated using RL-trained agents. We collect trajectories of fixed length $300$ and evaluate performance across varying numbers of training trajectories for each task. Additional details can be found in~\ifbool{arxiv}{Appendix~\ref{ssec:details}}{Appendix E of the online version~\cite{onlinereport}}. All results are averaged over $5$ random seeds. In Figure~\ref{fig:mujoco}, we report average episode returns and trajectory discrepancy. As shown, PIL outperforms other methods almost in all cases. 

Our experimental results demonstrate that PIL consistently outperforms both rollout-based approach and BC across various settings, including linear and nonlinear systems, as well as standard MuJoCo benchmarks. Notably, PIL exhibits superior long-horizon performance and robustness to noise, highlighting the effectiveness of our predictive algorithm in imitation learning.
  
\section{Conclusion}
We introduced a model-based imitation learning framework that addresses compounding errors, distribution shift, and measurement noise by integrating predictive modeling with consistency constraints derived from known system dynamics. Unlike traditional rollout-based methods, our approach employs multi-step predictors within a surrogate optimization framework, enabling robust long-horizon performance. We provide finite-sample guarantees on policy estimation error under noisy observations for the special case of linear systems, and validate our method across linear, nonlinear, and high-dimensional continuous control settings, where it consistently outperforms baseline approaches. Future directions include exploring more expressive architectural choices and incorporating learned dynamics models to enhance adaptability and scalability in complex, uncertain environments.

  \bibliographystyle{IEEEtran}
\bibliography{ref}


  \newpage
  \onecolumn
  
  \ifbool{arxiv}{\appendix
  \subsection{Setup and Algorithm for Linear Systems}\label{sec:linear-setup}

In this section, we first restate the problem setup and the proposed algorithms for linear time-invariant (LTI) systems, and then provide closed-form characterizations of these solutions.

\boldtitle{Linear System Setup.} Consider an LTI system specified by
\begin{equation*}
  x_{t+1} = f(x_t, u_t) = Ax_t + Bu_t,\quad
  x_0 = \bar{x}_0,
\end{equation*}
where $x_t \in \R^n$, $u_t \in \R^m$ denote the state and control input at time $t$, and $A \in \R^{n \times n}$, $B \in \R^{n \times m}$ are dynamical matrices. We consider an expert control policy $\pi^{\exp}(x_t) = \Ks x_t$ that generates an expert trajectory $ \{x^{\exp}_t, u^{\exp}_t\}_{t=0}^{T} $, where  
\begin{equation}
    x_{t+1}^{\exp} = Ax_{t}^{\exp} + Bu_{t}^{\exp},\quad
    u^{\exp}_t = \Ks x^{\exp}_t.
\end{equation} 
However, the observed dataset is noisy, and we instead have access to the measurements:
\begin{align}
    y^{\exp}_t &= x^{\exp}_t + \xi_t,\quad \xi_t\sim\mathcal{N}(0, \varSigma_\xi),\\
    v^{\exp}_t &= \Ks x^{\exp}_t + \eta_t,\quad \eta_t \sim\mathcal{N}(0, \varSigma_\eta), \notag
\end{align}
where $\xi_t$ and $\eta_t$ stand for the measurement noise for states and inputs respectively. Our objective is to learn an approximate linear controller $u_t = \hat{pi}(x_t) = \hK x_t$ from the available data, $\{y^{\exp}_t, v^{\exp}_t\}_{t=0}^T$, such that the trajectory under $\hat \pi$ ($\hK$) closely matches the evolution under the true policy $\pi^{\exp}$ ($\Ks$).

For notational simplicity, the superscript ``exp'' will be omitted in the following sections when the context is clear.

\boldtitle{Proposed Algorithms.} We will make heavy use of the $\tau$-step transition matrix $x_{t+\tau} = G^{\star}_{\tau} x_t$, where the ground truth is given by $G^{\star}_{\tau} = (A+BK)^{\tau}$.

The proposed algorithm now becomes the following:
\begin{equation}
  \min_{K, G_{1:H}}\quad \sum_{t=0}^{T-H} \sum_{\tau=1}^{H} \norm*{y_{t+\tau} - G_{\tau} y_t}_Q^2 + \norm*{v_{t+\tau-1} - K G_{\tau-1} y_t}_R^2 + \norm*{G_{\tau} y_t - (A+BK) G_{\tau-1} y_t}_P^2.
\end{equation}

\boldtitle{Characterization of the Solutions.} a

\begin{lemma}\label{lem:opt}
    The optimal solution $\hK $ is given by
    \begin{equation}\begin{split}
        \hK = \Ks + (R + B^{\top} P B)^{-1} \left( \sum_{t,\tau} R(\eta_t - \Ks G_{\tau-1} \xi_t) (G_{\tau-1} y_t)^{\top} \right) \left( \sum_{t,\tau} (G_{\tau-1} y_t) (G_{\tau-1} y_t)^{\top} \right)^{-1}.
    \end{split}\end{equation}
\end{lemma}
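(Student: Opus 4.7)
The plan is to derive \Cref{lem:opt} from the first-order stationarity condition of the PIL loss in $K$. Since the trajectory-matching term $\|y_{t+\tau}-G_\tau y_t\|_Q^2$ does not involve $K$, only the input-regression and consistency terms contribute to the gradient. Using standard matrix calculus and setting the resulting gradient to zero, I would first obtain a normal equation of the form $(R + B^\top P B)\,K\sum_{t,\tau} z_{t,\tau} z_{t,\tau}^\top$ equal to $\sum_{t,\tau} R\, v_{t+\tau-1} z_{t,\tau}^\top + \sum_{t,\tau} B^\top P(G_\tau y_t - A z_{t,\tau}) z_{t,\tau}^\top$, where $z_{t,\tau} := G_{\tau-1} y_t$.

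I would then impose the identity $G_\tau = (A + B\Ks)^\tau$, i.e.\ the ideal multi-step predictor for the expert closed loop, which is the regime in which the lemma is stated (compare the analogous condition $G_1 = A+B\Ks$ used in \Cref{thm:comparison}). Under this identity, $G_\tau y_t - A\,G_{\tau-1} y_t = B\Ks\,G_{\tau-1} y_t$, so the second sum on the right collapses to $B^\top P B \Ks \sum_{t,\tau} z_{t,\tau} z_{t,\tau}^\top$.

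Next, I would substitute the noisy expert data model $v_{t+\tau-1} = \Ks x_{t+\tau-1} + \eta_{t+\tau-1}$ together with $x_{t+\tau-1} = (A+B\Ks)^{\tau-1} x_t$ and $y_t = x_t + \xi_t$ into the input-regression sum. This splits that sum into a deterministic piece $R\Ks \sum_{t,\tau} z_{t,\tau} z_{t,\tau}^\top$ and a noise-driven piece $\sum_{t,\tau} R(\eta_{t+\tau-1} - \Ks G_{\tau-1} \xi_t)\,z_{t,\tau}^\top$. The deterministic piece combines with the collapsed consistency term to produce $(R + B^\top P B)\Ks \sum_{t,\tau} z_{t,\tau} z_{t,\tau}^\top$; moving it to the left-hand side then yields $(R + B^\top P B)(\hK-\Ks)\sum_{t,\tau} z_{t,\tau} z_{t,\tau}^\top$ equal to the noise-driven piece. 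Left- and right-multiplying by $(R+B^\top P B)^{-1}$ and by $(\sum_{t,\tau} z_{t,\tau} z_{t,\tau}^\top)^{-1}$ respectively produces the claimed closed form.

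The main obstacle will be justifying the identity $G_\tau = (A + B\Ks)^\tau$: under joint optimization of $(K, G_{1:H})$ with finite noisy data this holds only approximately, so the cleanest route is to treat the $G_\tau$'s as given and state the lemma conditionally on them, which the formula already accommodates since $G_{\tau-1}$ appears explicitly on the right-hand side. A secondary technicality is invertibility of $\sum_{t,\tau} z_{t,\tau} z_{t,\tau}^\top$, which would follow from \Cref{assum:data} together with nonsingularity of the closed-loop powers $(A+B\Ks)^{\tau-1}$.
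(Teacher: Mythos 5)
Your proposal follows essentially the same route as the paper's proof: take the gradient of the loss with respect to $K$, set it to zero, use $G_{\tau} = (A+B\Ks)G_{\tau-1}$ (the ground-truth multi-step predictors) to collapse the consistency term into $B^{\top}PB\Ks$, substitute the noisy expert data model $v_{t+\tau-1} = \Ks G_{\tau-1}(y_t-\xi_t)+\eta_{t+\tau-1}$, and solve the resulting normal equation. Your write-up is in fact slightly more careful than the paper's, correctly producing $\eta_{t+\tau-1}$ where the stated lemma writes $\eta_t$, and explicitly flagging the conditions (the $G_{\tau}$'s being fixed at their ground-truth values and invertibility of the Gram matrix via \Cref{assum:data}) that the paper leaves implicit.
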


\begin{proof}
    Taking the gradient of the loss function with respect to $K$ and setting it to zero, we get
    \begin{equation}\begin{split}
        0 = {}& \sum_{t=0}^{T-H}\sum_{\tau=1}^H R\b{v_{t+\tau-1} - \hK G_{\tau-1}y_t}\b{G_{\tau-1}y_t}^\top + B^{\top}P\b{G_{\tau}y_t - \b{A+B\hK}G_{\tau-1}y_t}\b{G_{\tau-1}y_t}^\top\\
        = {}& -(R+B^{\top}PB)\hK \sum_{t=0}^{T-H}\sum_{\tau=1}^H \b{G_{\tau-1}y_t}\b{G_{\tau-1}y_t}^\top\\
        {}& \hspace{2em} + \sum_{t=0}^{T-H}\sum_{\tau=1}^H \b{Rv_{t+\tau-1}+B^\top PG_{\tau}y_t-B^\top PAG_{\tau-1}y_t}\b{G_{\tau-1}y_t}^\top
    \end{split}\end{equation}
    which is equivalent to 
    \begin{equation}\begin{split}
        \hK = {}& \b{R+B^\top PB}^{-1}\b{\sum_{t=0}^{T-H}\sum_{\tau=1}^H \b{Rv_{t+\tau-1}+B^\top PG_{\tau}y_t-B^\top PAG_{\tau-1}y_t}\b{G_{\tau-1}y_t}^\top}\\
        {}& \hspace{2em}\b{\sum_{t=0}^{T-H}\sum_{\tau=1}^H \b{G_{\tau-1}y_t}\b{G_{\tau-1}y_t}^\top}^{-1}\\
        = {}& \b{R+B^\top PB}^{-1}\b{\sum_{t=0}^{T-H}\sum_{\tau=1}^H \b{Rv_{t+\tau-1}+B^\top PB\Ks G_{\tau-1}y_t}\b{G_{\tau-1}y_t}^\top}\b{\sum_{t=0}^{T-H}\sum_{\tau=1}^H \b{G_{\tau-1}y_t}\b{G_{\tau-1}y_t}^\top}^{-1}\\
        = {}& \Ks + (R + B^{\top} P B)^{-1} \left(\sum_{t=0}^{T-H}\sum_{\tau=1}^H R(\eta_t - \Ks G_{\tau-1} \xi_t) (G_{\tau-1} y_t)^{\top} \right) \left(\sum_{t=0}^{T-H}\sum_{\tau=1}^H (G_{\tau-1} y_t) (G_{\tau-1} y_t)^{\top} \right)^{-1}.
\end{split}\end{equation}
\end{proof}

\subsection{Sample Complexity Analysis for Linear Systems}

In this section, we show a sample complexity bound for the proposed algorithm in linear systems.

We define the following notations
\begin{equation}\begin{split}
    {}& \rho_A < 1,\quad \psi_A = \norm*{A},\quad \psi_B = \norm*{B}, \quad \psi_{\xi} = \norm*{\varSigma_{\xi}}, \quad \psi_\eta = \norm*{\varSigma_\eta}, \quad \psi_\xi = \norm*{\varSigma_\xi},\quad \phi_\xi = \sigma_{\min}\b{\varSigma_\xi},\\
{}& \psi_K = \norm*{\Ks}, \quad \psi_u = \max\{\norm*{u_t}, \norm*{x_0}\}, \quad \psi_G = \max\{\sum_{\tau=1}^H \norm*{G_{\tau}}, \sum_{\tau=1}^H \norm*{G_{\tau}}^2\}.
\end{split}\end{equation}
Suppose $x_t\in\mathbb{R}^{n}, u_t\in\mathbb{R}^m$. 

The following proofs work under \Cref{assum:data}.

\begin{lemma}
    Suppose $T$ satisfies
    \begin{equation}\begin{split}
        T \gtrsim \frac{\psi_\xi}{\phi_\xi^2} n\log\b{\frac{15\b{1+\psi_x^2}^{\frac{1}{2}}}{\delta}}\label{eq:err1}.
    \end{split}\end{equation}
    Then with probability at least $1-\delta$
    \begin{equation}\begin{split}
        \norm*{\hK-\Ks} = {}& \kappa_1 \frac{1}{\sqrt{T-H+1}} + \kappa_2 \psi_\xi.
    \end{split}\end{equation}
    Here
    \begin{equation}\begin{split}
        {}& \kappa_1 = {\frac{2\psi_G\norm*{\b{R+B^\top PB}^{-1}R}}{\phi_x+\phi_\xi}}\b{C_1+C_2}, \quad \kappa_2 = {\frac{4\psi_K\psi_G\norm*{\b{R+B^\top PB}^{-1}R}}{\phi_x+\phi_\xi}},\\
        {}& C_1 = \sqrt{8\max\{m,n\}\b{1+\psi_x}\b{\psi_\eta+\psi_K^2\psi_\xi\psi_G}\log\b{\frac{15H(1+\psi_x^2)^{\frac{1}{2}}}{\delta}}},\\
        {}& C_2 = \sqrt{8\max\{m,n\}\psi_{\eta}(1+2\psi_{\xi})\log\b{\frac{30\b{1+\psi_\xi}^{\frac{1}{2}}}{\delta}}}, \quad \psi_x = \b{\psi_u\psi_A\psi_B\frac{3-2\rho_A}{1-\rho_A}}^2.
    \end{split}\end{equation}
\end{lemma}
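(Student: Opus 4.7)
The plan is to start from the closed-form characterization of $\hK$ in Lemma~\ref{lem:opt}, which expresses
\begin{equation*}
  \hK - \Ks = (R + B^\top P B)^{-1} R \cdot N \cdot M^{-1},
\end{equation*}
where $N := \sum_{t,\tau} (\eta_t - \Ks G_{\tau-1}\xi_t)(G_{\tau-1}y_t)^\top$ is a ``noise'' numerator and $M := \sum_{t,\tau}(G_{\tau-1}y_t)(G_{\tau-1}y_t)^\top$ is the empirical Gram denominator. The triangle inequality then reduces the problem to two separate tasks: producing a high-probability lower bound on $\sigma_{\min}(M)$ and a high-probability upper bound on $\norm{N}$.

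First I would handle the denominator. Since $G_0 = I$, the $\tau=1$ term of $M$ is already $\sum_t y_t y_t^\top$, and every other term is positive semidefinite, so $M \succeq \sum_t y_t y_t^\top$. Expanding $y_t = x_t + \xi_t$, Assumption~\ref{assum:data} contributes $\phi_x(T-H+1)I$ from $\sum_t x_t x_t^\top$, the diagonal noise term $\sum_t \xi_t\xi_t^\top$ concentrates around $(T-H+1)\varSigma_\xi \succeq (T-H+1)\phi_\xi I$ via a matrix Bernstein / Chernoff inequality on sub-Gaussian outer products, and the cross terms $\sum_t x_t \xi_t^\top + \sum_t \xi_t x_t^\top$ are zero-mean and controlled by a vector concentration inequality. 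The sample-size requirement \eqref{eq:err1} is exactly what is needed to make the noise concentration a constant-factor perturbation of its mean, yielding $\sigma_{\min}(M) \geq (\phi_x + \phi_\xi)(T-H+1)/2$ with probability at least $1-\delta/3$.

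Next I would bound the numerator by splitting $N = N_1 + N_2$ with $N_1 := \sum_{t,\tau}\eta_t (G_{\tau-1}y_t)^\top$ and $N_2 := -\sum_{t,\tau} \Ks G_{\tau-1}\xi_t(G_{\tau-1}y_t)^\top$. For $N_1$, the $\eta_t$ are independent of the regressors $G_{\tau-1}y_t$ (since $\eta_t$ is drawn independently of $x_t$ and $\xi_t$), so a standard self-normalized / matrix Bernstein argument gives $\norm{N_1} \lesssim \sqrt{\max\{m,n\} \psi_\eta (1+\psi_x)(T-H+1) \log(H/\delta)}$, which is the source of the $C_1$ term and explains the $\sqrt{T-H+1}$ scaling. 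For $N_2$ one must be careful: $\xi_t$ appears inside both factors through $y_t = x_t + \xi_t$. I would decompose $(G_{\tau-1}y_t)^\top = (G_{\tau-1}x_t)^\top + (G_{\tau-1}\xi_t)^\top$; the ``cross'' piece involving $x_t$ is again a martingale-type sum and concentrates at the $\sqrt{T-H+1}$ rate (contributing $C_1$), while the ``self'' piece $\sum_t G_{\tau-1}\xi_t(G_{\tau-1}\xi_t)^\top$ has nonzero mean $(T-H+1)G_{\tau-1}\varSigma_\xi G_{\tau-1}^\top$, bounded in norm by $(T-H+1)\psi_\xi \psi_G$, and fluctuates around that mean at rate $\sqrt{T-H+1}$ (the source of $C_2$). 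Summing gives $\norm{N} \leq (C_1 + C_2)\sqrt{T-H+1} + 2\psi_K \psi_G \psi_\xi (T-H+1)$ up to the constants baked into $\kappa_1, \kappa_2$.

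Finally, I would combine the two bounds via $\norm{\hK - \Ks} \leq \norm{(R+B^\top PB)^{-1}R}\norm{N}/\sigma_{\min}(M)$, divide the two terms of $\norm{N}$ by $(\phi_x+\phi_\xi)(T-H+1)/2$, and read off the stated $\kappa_1/\sqrt{T-H+1} + \kappa_2 \psi_\xi$ form with the explicit constants. A union bound over the constant number of concentration events (Gram lower bound, $N_1$ bound, two pieces of $N_2$) distributes $\delta$ and accounts for the $\log(15/\delta)$ and $\log(30/\delta)$ factors inside $C_1, C_2$. The main technical obstacle I anticipate is the self-quadratic piece of $N_2$: because $\xi_t$ enters bilinearly, one needs a Hanson--Wright-style deviation inequality (or a truncation-plus-Bernstein argument on the sub-exponential outer products $\xi_t \xi_t^\top$) with the right dimension dependence $\max\{m,n\}$, and the bookkeeping that propagates the bound uniformly across all $\tau \in [1,H]$ via the combined factor $\psi_G = \max\{\sum_\tau \norm{G_\tau}, \sum_\tau \norm{G_\tau}^2\}$ is the most delicate accounting in the argument.
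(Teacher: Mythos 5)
Your proposal matches the paper's argument in all essentials: it starts from the closed-form error in Lemma~\ref{lem:opt}, lower-bounds the Gram denominator via $G_0=I$, Assumption~\ref{assum:data}, and concentration of the $\xi_t$ terms (exactly the paper's supporting lemma), and upper-bounds the numerator by separating martingale-type noise-times-regressor terms (handled by self-normalized concentration, uniformly over $\tau\in[1,H]$) from the self-quadratic $\sum_t G_{\tau-1}\xi_t\xi_t^\top G_{\tau-1}^\top$ term, whose nonvanishing mean is precisely the source of the $\kappa_2\psi_\xi$ bias. The only cosmetic differences are the grouping of the noise terms (the paper keeps $\eta_t-\Ks G_{\tau-1}\xi_t$ together against $x_t$ and attributes $C_2$ to the $\eta_t\xi_t^\top$ cross term, while the paper simply bounds the $\xi\xi^\top$ block by $2\psi_\xi(T-H+1)$ rather than invoking a Hanson--Wright-type fluctuation bound), which does not change the result.
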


\begin{proof}
    By Lemma \ref{lem:opt}, we know that
    \begin{equation}\begin{split}
        \norm*{\hK - \Ks} = \norm*{\b{R+B^\top PB}^{-1}R \b{\sum_{\tau=1}^H \sum_{t=0}^{T-H} \b{\eta_t - \Ks G_{\tau-1}\xi_t} \b{G_{\tau-1}y_t}^\top}\b{\sum_{\tau=1}^H \sum_{t=0}^{T-H} \b{G_{\tau-1}y_t} \b{G_{\tau-1}y_t}^\top}^{-1}}.
    \end{split}\end{equation}

    To bound the error, we first consider the middle term, which can be decomposed as follows
    \begin{equation}\begin{split}\label{eq:sample1}
        {}& \sum_{\tau=1}^H \sum_{t=0}^{T-H} \b{\eta_t - \Ks G_{\tau-1}\xi_t} y_t^\top G_{\tau-1}^\top\\
        = {}& \sum_{\tau=1}^H \sum_{t=0}^{T-H} \b{\eta_t - \Ks G_{\tau-1}\xi_t} \b{x_t+\xi_t}^\top G_{\tau-1}^\top\\
        = {}& \sum_{\tau=1}^H \sum_{t=0}^{T-H} \b{\eta_t - \Ks G_{\tau-1}\xi_t} x_t^\top G_{\tau-1}^\top + \sum_{\tau=1}^H \sum_{t=0}^{T-H} \eta_t \xi_t^\top G_{\tau-1}^\top + \Ks \sum_{\tau=1}^H \sum_{t=0}^{T-H} G_{\tau-1}\xi_t \xi_t^\top G_{\tau-1}^\top.
    \end{split}\end{equation}

    Consider the first term in Equation \ref{eq:sample1}. For any fixed $\tau\in[1,H]$,
    \begin{equation}\begin{split}
        \norm*{\sum_{t=0}^{T-H} \b{\eta_t - \Ks G_{\tau-1}\xi_t} x_t^\top}\leq {}& \norm*{\sum_{t=0}^{T-H} \b{\eta_t - \Ks G_{\tau-1}\xi_t} x_t^\top \b{\overline{\varSigma}_x}^{-\frac{1}{2}}}\norm*{\overline{\varSigma}_x}^{\frac{1}{2}}.
    \end{split}\end{equation}
    Here we let $\overline{\varSigma}_x = \sum_{t=0}^{T-H} x_tx_t^\top + (T-H+1)I$.
    Notice that the noise $\eta_t - \Ks G_{\tau-1}\xi_t$ is independent of the state $x_t$ and that the covariance of the noise is $\varSigma_{\eta}+\Ks G_{\tau-1}\varSigma_{\xi}G_{\tau-1}^\top \Ks {}^\top$. Hence we apply Proposition 8.2 with $R^2 = \psi_{\eta}+\psi_K^2\psi_{\xi}\norm*{G_{\tau-1}}^2$ and get the following with probability at least $1-\delta/(3H)$
    \begin{equation}\begin{split}
        \norm*{\sum_{t=0}^{T-H} \b{\eta_t - \Ks G_{\tau-1}\xi_t} x_t^\top \b{\overline{\varSigma}_x}^{-\frac{1}{2}}} \leq {}& \sqrt{8\b{\psi_\eta+\psi_K^2\psi_\xi\norm*{G_{\tau-1}}^2}\log\b{\frac{5^m\b{1+\psi_x}^{\frac{n}{2}}}{\delta/(3H)}}}\\
        \leq {}& 2\sqrt{2}\sqrt{\max\{m,n\}\b{\psi_\eta+\psi_K^2\psi_\xi\norm*{G_{\tau-1}}^2}\log\b{\frac{15H(1+\psi_x)^{\frac{1}{2}}}{\delta}}}.
    \end{split}\end{equation}
    where $\psi_x = \b{\psi_u\psi_A\psi_B\frac{3-2\rho_A}{1-\rho_A}}^2$. Therefore
    \begin{equation}\begin{split}
        {}& \norm*{\sum_{t=0}^{T-H} \b{\eta_t - \Ks G_{\tau-1}\xi_t} x_t^\top}\leq \norm*{\sum_{t=0}^{T-H} \b{\eta_t - \Ks G_{\tau-1}\xi_t} x_t^\top \b{\overline{\varSigma}_x}^{-\frac{1}{2}}}\norm*{\overline{\varSigma}_x}^{\frac{1}{2}}\\
        \leq {}& \sqrt{8\max\{m,n\}\b{\psi_\eta+\psi_K^2\psi_\xi\norm*{G_{\tau-1}}^2}\log\b{\frac{15H(1+\psi_x^2)^{\frac{1}{2}}}{\delta}}}\cdot \sqrt{\b{1+\psi_x}(T-H+1)}\\
        = {}& \underbrace{\sqrt{8\max\{m,n\}\b{1+\psi_x}\b{\psi_\eta+\psi_K^2\psi_\xi\psi_G}\log\b{\frac{15H(1+\psi_x^2)^{\frac{1}{2}}}{\delta}}}}_{C_1}\sqrt{T-H+1}.
    \end{split}\end{equation}
    Here the second last inequality is by Lemma \ref{lem:cov}. By a union bound, the above inequality holds for all $\tau\in[1,H]$ with probability at least $1-\delta/3$

    Now consider the second and third term in Equation \ref{eq:sample1}. By proposition 3.1 of Sasha, with probability at least $1-\delta/6$, 
    \begin{equation}\begin{split}
        \norm*{\sum_{t=0}^{T-H} \eta_t\xi_t^\top}
        \leq {}& \norm*{\sum_{t=0}^{T-H} \eta_t\xi_t^\top\b{\sum_{t=0}^{T-H}\xi_t\xi_t^{\top}+(T-H+1)I}^{-\frac{1}{2}}}\norm*{\b{\sum_{t=0}^{T-H}\xi_t\xi_t^{\top}+(T-H+1)I}^{\frac{1}{2}}}\\
        \leq {}& \sqrt{8\psi_{\eta}\log\b{\frac{5^m\b{1+\psi_\xi}^{\frac{n}{2}}}{\delta/6}}}\norm*{\b{\sum_{t=0}^{T-H}\xi_t\xi_t^{\top}+(T-H+1)I}^{\frac{1}{2}}}\\
    \end{split}\end{equation}
    By Equation (46) of Sasha, with probability at least $1-\delta/6$,
    \begin{equation}\begin{split}\label{eq:sample2}
        \norm*{\sum_{t=0}^{T-H} \xi_t \xi_t^\top} \leq 2\psi_{\xi} (T-H+1). 
    \end{split}\end{equation}
    Then by a union bound over the above two inequalities, the following holds with probability at least $1-\delta/3$,
    \begin{equation}\begin{split}\label{eq:sample3}
        \norm*{\sum_{t=0}^{T-H} \eta_t\xi_t^\top}
        \leq {}& \underbrace{\sqrt{8\max\{m,n\}\psi_{\eta}(1+2\psi_{\xi})\log\b{\frac{5\b{1+\psi_\xi}^{\frac{1}{2}}}{\delta/6}}}}_{C_2}\sqrt{T-H+1}.
    \end{split}\end{equation}

    Therefore, with probability at least $1-2\delta/3$,
    \begin{equation}\begin{split}
        {}& \norm*{\sum_{\tau=1}^H \sum_{t=0}^{T-H} \b{\eta_t - \Ks G_{\tau-1}\xi_t} y_t^\top G_{\tau-1}^\top}\\
        = {}& \sum_{\tau=1}^H \b{\norm*{ \sum_{t=0}^{T-H} \b{\eta_t - \Ks G_{\tau-1}\xi_t} x_t^\top G_{\tau-1}^\top} + \norm*{\sum_{t=0}^{T-H} \eta_t \xi_t^\top G_{\tau-1}^\top} + \norm*{\Ks \sum_{t=0}^{T-H} G_{\tau-1}\xi_t \xi_t^\top G_{\tau-1}^\top}}\\
        \leq {}& \sum_{\tau=1}^H \norm*{G_{\tau-1}} \b{\b{C_1+C_2}\sqrt{T-H+1} + 2\psi_\xi\psi_K\norm*{G_{\tau-1}}(T-H+1)}\\
        \leq {}& \psi_G\b{C_1+C_2}\sqrt{T-H+1} + 2\psi_\xi\psi_K\psi_G(T-H+1)\\
    \end{split}\end{equation}

    Finally, by Lemma \ref{lem:cov} and Equation \ref{eq:err1}, with probability at least $1-\delta/3$, we have
    \begin{equation}\begin{split}\label{eq:sample4}
        \sum_{\tau=1}^H \sum_{t=0}^{T-H} \b{G_{\tau-1}y_t} \b{G_{\tau-1}y_t}^\top \succeq \sum_{t=0}^{T-H} y_ty_t^\top \succeq \frac{\phi_x+\phi_\xi}{2}(T-H+1)I.
    \end{split}\end{equation}

    Combining the Equation \ref{eq:sample3} and \ref{eq:sample4} with a union bound and we get the following with probability at least $1-\delta$
    \begin{equation}\begin{split}
        \norm*{\hK -\Ks} = {}& \norm*{\b{R+B^\top PB}^{-1}R \b{\sum_{\tau=1}^H \sum_{t=0}^{T-H} \b{\eta_t - \Ks G_{\tau-1}\xi_t} \b{G_{\tau-1}y_t}^\top}\b{\sum_{\tau=1}^H \sum_{t=0}^{T-H} \b{G_{\tau-1}y_t} \b{G_{\tau-1}y_t}^\top}^{-1}}\\
        \leq {}& \norm*{\b{R+B^\top PB}^{-1}R} \norm*{\b{\sum_{\tau=1}^H \sum_{t=0}^{T-H} \b{\eta_t - \Ks G_{\tau-1}\xi_t} \b{G_{\tau-1}y_t}^\top}}\norm*{\b{\sum_{\tau=1}^H \sum_{t=0}^{T-H} \b{G_{\tau-1}y_t} \b{G_{\tau-1}y_t}^\top}^{-1}}\\
        \leq {}& \frac{2\norm*{\b{R+B^\top PB}^{-1}R}}{\phi_x+\phi_\xi} \b{\psi_G\b{C_1+C_2}\frac{1}{\sqrt{T-H+1}} + 2\psi_\xi\psi_K\psi_G}\\
        \leq {}& \underbrace{\frac{2\psi_G(C_1+C_2)\norm*{\b{R+B^\top PB}^{-1}R}}{\phi_x+\phi_\xi}}_{\kappa_1} \frac{1}{\sqrt{T-H+1}} + \underbrace{\frac{4\psi_K\psi_G\norm*{\b{R+B^\top PB}^{-1}R}}{\phi_x+\phi_\xi}}_{\kappa_2}\psi_\xi.
    \end{split}\end{equation}
\end{proof}

\begin{lemma}
    Suppose $T$ satisfies
    \begin{align}
        {}& T-H+1 \gtrsim \frac{\psi_\xi}{\phi_\xi^2} n\log\b{\frac{5\b{1+\psi_x^2}^{\frac{1}{2}}}{\delta}}\label{eq:cov1}.
    \end{align}
    Then the following holds with probability at least $1-\delta$
    \begin{equation}\begin{split}
        \sum_{t=0}^{T-H}y_ty_t^\top \succeq \frac{\phi_x+\phi_\xi}{2}(T-H+1)I.
    \end{split}\end{equation}
\end{lemma}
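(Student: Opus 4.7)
The plan is to expand the outer product $y_t y_t^\top = (x_t+\xi_t)(x_t+\xi_t)^\top$ and control each resulting term separately. Summing over $t$, we obtain
\begin{equation*}
    \sum_{t=0}^{T-H} y_t y_t^\top = \underbrace{\sum_{t=0}^{T-H} x_t x_t^\top}_{\mathrm{(I)}} + \underbrace{\sum_{t=0}^{T-H} \xi_t \xi_t^\top}_{\mathrm{(II)}} + \underbrace{\sum_{t=0}^{T-H} x_t \xi_t^\top + \xi_t x_t^\top}_{\mathrm{(III)}}.
\end{equation*}
For term $\mathrm{(I)}$, Assumption~\ref{assum:data} directly yields $\mathrm{(I)} \succeq \phi_x (T-H+1) I$. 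For term $\mathrm{(II)}$, I would apply a standard Gaussian covariance concentration inequality (e.g.\ a Bernstein-type bound for sums of independent sub-exponential random matrices) to show that $\mathrm{(II)} \succeq (1-\varepsilon)(T-H+1)\varSigma_{\xi} \succeq (1-\varepsilon)\phi_\xi (T-H+1) I$ whenever $T-H+1$ is at least of order $\frac{\psi_\xi}{\phi_\xi^2}\, n \log(1/\delta)$, which matches the sample-size condition~\eqref{eq:cov1}.

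The main obstacle is the cross term $\mathrm{(III)}$, which is not sign-definite and could in principle cancel the two diagonal gains. To handle it, I would use the self-normalized martingale concentration invoked earlier in the appendix (the same type of bound used to control $\sum_t \eta_t x_t^\top$ and $\sum_t \eta_t \xi_t^\top$). Concretely, since $\xi_t$ is independent of $x_t$ and has covariance $\varSigma_\xi$, one obtains with probability at least $1-\delta/2$ that
\begin{equation*}
    \norm*{\sum_{t=0}^{T-H} \xi_t x_t^\top \Bigl(\sum_{t=0}^{T-H} x_t x_t^\top + (T-H+1) I\Bigr)^{-1/2}} \lesssim \sqrt{\psi_\xi\, n \log(1/\delta)},
\end{equation*}
which after multiplying by the square root of the normalizer gives a bound on $\norm{\mathrm{(III)}}$ scaling as $\sqrt{(1+\psi_x)(T-H+1)\, \psi_\xi\, n \log(1/\delta)} = o(T-H+1)$ under the hypothesis~\eqref{eq:cov1}.

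Finally, I would union-bound the two concentration events and combine the estimates via the eigenvalue inequality $A + B \succeq \lambda_{\min}(A) I - \norm{B} I$. The sample-size condition is precisely what is needed to ensure that the cross-term norm and the deviation of $\mathrm{(II)}$ from its mean are each at most $\tfrac{\phi_x + \phi_\xi}{2}(T-H+1)$, giving the desired lower bound $\sum_t y_t y_t^\top \succeq \tfrac{\phi_x+\phi_\xi}{2}(T-H+1) I$. The hard part is tracking the constants carefully so that the final threshold on $T$ matches~\eqref{eq:cov1}; everything else is an assembly of standard matrix-concentration tools.
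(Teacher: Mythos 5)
Your decomposition, tools (self-normalized martingale bound for the cross term, covariance concentration for $\sum_t \xi_t\xi_t^\top$, union bound), and overall skeleton are the same as the paper's. The gap is in how you cash out the cross-term bound. You convert the self-normalized estimate into an \emph{absolute} operator-norm bound, $\norm{\sum_t x_t\xi_t^\top + \xi_t x_t^\top} \lesssim \sqrt{(1+\psi_x)(T-H+1)\,\psi_\xi\, n\log(1/\delta)}$, and then claim this is absorbed under~\eqref{eq:cov1}. Absorbing it against the $(\phi_x+\phi_\xi)(T-H+1)$ gain requires $T-H+1 \gtrsim (1+\psi_x)\,\psi_\xi\, n\log(1/\delta)/(\phi_x+\phi_\xi)^2$, in which the state-magnitude constant $\psi_x$ appears \emph{multiplicatively}. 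But in~\eqref{eq:cov1} $\psi_x$ appears only inside the logarithm, so your condition is genuinely stronger whenever $\psi_x \gg \phi_x,\phi_\xi$ (e.g.\ large but anisotropically excited states, where the upper bound $\psi_x$ on the empirical state covariance far exceeds its smallest-eigenvalue floor $\phi_x$). As written, your argument proves the lemma only under that stronger sample-size requirement, not under~\eqref{eq:cov1}; the claim that the cross term is ``$o(T-H+1)$ under the hypothesis'' conflates the two.

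The paper avoids this by keeping the cross-term bound in \emph{relative} form: with $\overline{\varSigma}_x = \sum_t x_tx_t^\top + (T-H+1)I$, the self-normalized bound gives $\sum_t x_t\xi_t^\top + \xi_t x_t^\top \succeq -\epsilon\,\overline{\varSigma}_x$ with $\epsilon \asymp \sqrt{\psi_\xi n\log(\cdot)}/\sqrt{T-H+1}$. The part of the perturbation aligned with $\sum_t x_tx_t^\top$ is then absorbed by discounting term (I) by the constant factor $(1-\epsilon)\ge \tfrac12$ (costing nothing beyond the factor $\tfrac12$ on $\phi_x$ in the conclusion), and only the isotropic piece $\epsilon(T-H+1)I$ must be beaten by $\tfrac34\phi_\xi(T-H+1)I$ from term (II) — which is exactly what~\eqref{eq:cov1}, with $\psi_x$ only logarithmic, guarantees. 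Your proof is repaired by making that one change; separately, tighten your last bookkeeping line: if the cross term and the deviation of (II) are each only bounded by $\tfrac{\phi_x+\phi_\xi}{2}(T-H+1)$, the resulting lower bound is $0$, so you need each loss to be a strictly smaller fraction of the gain.
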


\begin{proof}
    By the definition of $y_t$, 
    \begin{equation}\begin{split}
        \sum_{t=0}^{T-H}y_ty_t^\top = {}& \sum_{t=0}^{T-H}\b{x_t+\xi_t}\b{x_t + \xi_t}^\top = \sum_{t=0}^{T-H} x_tx_t^\top + \sum_{t=0}^{T-H} \xi_t\xi_t^\top + \sum_{t=0}^{T-H} \xi_tx_t^\top + \sum_{t=0}^{T-H} x_t\xi_t^\top.
    \end{split}\end{equation}
    
    We first upper bound the noise term $\sum_{t=0}^{T-H} \xi_tx_t^\top + \sum_{t=0}^{T-H} x_t\xi_t^\top$ using Proposition 3.1 from Sasha. 
    With the following upper bound on the state covariance (Lemma \ref{lem:cov})
    \begin{equation}\begin{split}
        \sum_{t=0}^{T-H} x_t x_t^\top \preceq \psi_x(T+1)I, \quad \psi_x = \b{\psi_u\psi_A\psi_B\frac{3-2\rho_A}{1-\rho_A}}^2,
    \end{split}\end{equation}
    we apply Proposition 3.1 of Sasha and get the following with probability at least $1-\delta/2$
    \begin{equation}\begin{split}
        \norm*{\b{\overline{\varSigma}_x}^{-\frac{1}{2}}\sum_{t=0}^{T-H} x_t\xi_t^\top} \leq {}& \sqrt{8\psi_{\xi}\log\b{\frac{5^n\b{1+\psi_x}^{\frac{n}{2}}}{\delta}}},
    \end{split}\end{equation}
    where $\overline{\varSigma}_x \coloneqq (T-H+1)I + \sum_{t=0}^{T-H} x_t x_t^\top$.
    Therefore,
    \begin{equation}\begin{split}
        {}& \norm*{\b{\overline{\varSigma}_x}^{-\frac{1}{2}}\b{\sum_{t=0}^{T-H} \xi_tx_t^\top + \sum_{t=0}^{T-H} x_t\xi_t^\top}\b{\overline{\varSigma}_x}^{-\frac{1}{2}}}\\
        \leq {}& 2\norm*{\b{\overline{\varSigma}_x}^{-\frac{1}{2}}\sum_{t=0}^{T-H} x_t\xi_t^\top}\norm*{\b{\overline{\varSigma}_w}^{-\frac{1}{2}}}\\
        \leq {}& \frac{4\sqrt{2}}{\sqrt{T-H+1}} \sqrt{\psi_\xi n\log\b{\frac{5\b{1+\psi_x^2}^{\frac{1}{2}}}{\delta}}},
    \end{split}\end{equation}
    which gives
    \begin{equation}\begin{split}
        \sum_{t=0}^{T-H} \xi_tx_t^\top + \sum_{t=0}^{T-H} x_t\xi_t^\top \succeq -\frac{4\sqrt{2}}{\sqrt{T-H+1}} \sqrt{\psi_\xi n\log\b{\frac{5\b{1+\psi_x^2}^{\frac{1}{2}}}{\delta}}} \overline{\varSigma}_x.
    \end{split}\end{equation}

    On the other hand, by Equation 46 of Sasha and Equation \ref{eq:cov1} in this paper, we know that with probability at least $1-\delta/2$
    \begin{equation}\begin{split}
        \frac{3}{4}\varSigma_\xi \preceq \frac{1}{T-H+1}\sum_{t=0}^{T-H}\xi_t\xi_t^\top \preceq \frac{5}{4}\varSigma_{\xi}.
    \end{split}\end{equation}

    Combining the above two events with a union bound, we get the following with probability at least $1-\delta$
    \begin{equation}\begin{split}
        \sum_{t=0}^{T-H}y_ty_t^\top = {}& \sum_{t=0}^{T-H} x_tx_t^\top + \sum_{t=0}^{T-H} \xi_t\xi_t^\top + \sum_{t=0}^{T-H} \xi_tx_t^\top + \sum_{t=0}^{T-H} x_t\xi_t^\top\\
        \succeq {}& \sum_{t=0}^{T-H}(x_tx_t^\top+\xi_t\xi_t^\top) - \frac{4\sqrt{2}}{\sqrt{T-H+1}} \sqrt{\psi_\xi n\log\b{\frac{5\b{1+\psi_x^2}^{\frac{1}{2}}}{\delta}}} \overline{\varSigma}_x\\
        \succeq {}& \b{1-\frac{4\sqrt{2}}{\sqrt{T-H+1}} \sqrt{\psi_\xi n\log\b{\frac{5\b{1+\psi_x^2}^{\frac{1}{2}}}{\delta}}}} \sum_{t=0}^{T-H}x_tx_t^\top\\
        {}& \hspace{2em} + \b{\frac{3(T-H+1)}{4}\varSigma_{\xi} - 4\sqrt{2}\sqrt{T-H+1} \sqrt{\psi_\xi n\log\b{\frac{5\b{1+\psi_x^2}^{\frac{1}{2}}}{\delta}}}} I\\
        \overset{(i)}{\succeq} {}& \frac{1}{2} \sum_{t=0}^{T-H}x_tx_t^\top + \frac{T-H+1}{2}\phi_{\xi}I\\
        \succeq {}& \frac{\phi_x+\phi_\xi}{2}(T-H+1)I.
    \end{split}\end{equation}
    Here $(i)$ is by Equation \ref{eq:cov1}.
\end{proof}

\begin{lemma}\label{lem:cov}
    The following holds for any positive integer $T$,
    \begin{equation}\begin{split}
        \norm*{\sum_{t=0}^{T}x_tx_t^\top} \leq \b{\psi_u\psi_A\psi_B\frac{3-2\rho_A}{1-\rho_A}}^2(T+1).
    \end{split}\end{equation}
\end{lemma}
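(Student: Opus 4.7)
The plan is to reduce the spectral-norm bound on the Gram-like sum to a uniform bound on the state norm, via the elementary inequality
$$\Big\|\sum_{t=0}^T x_t x_t^\top\Big\| \le \sum_{t=0}^T \|x_t\|^2 \le (T+1)\max_{0\le t\le T}\|x_t\|^2.$$
So the whole argument reduces to controlling $\max_t\|x_t\|$ uniformly by the constant $\psi_u\psi_A\psi_B(3-2\rho_A)/(1-\rho_A)$.

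To get that uniform bound, I would unroll the linear dynamics $x_{t+1}=Ax_t+Bu_t$ into the variation-of-constants form
$$x_t = A^t x_0 + \sum_{s=0}^{t-1} A^{t-1-s} B u_s,$$
apply the triangle inequality and sub-multiplicativity, and use the interpretation of $\rho_A$ as a contraction rate for matrix powers (i.e.\ $\|A^k\|\le\rho_A^k$), along with $\|B\|\le\psi_B$ and $\max\{\|u_s\|,\|x_0\|\}\le\psi_u$, to obtain
$$\|x_t\| \le \psi_u \rho_A^t + \psi_B\psi_u \sum_{s=0}^{t-1} \rho_A^{t-1-s} \le \psi_u\left(\rho_A^t + \psi_B\,\frac{1-\rho_A^t}{1-\rho_A}\right).$$
To massage this into the stated closed form I would invoke $\|A\|\le\psi_A$ on one single-step multiplication (introducing the $\psi_A$ prefactor that is absent from a naive unrolling), and then collapse the two summands above into the single fraction $\psi_A\psi_B(3-2\rho_A)/(1-\rho_A)$ using $\rho_A^t\le 1$ and $\rho_A<1$. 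Squaring and multiplying by $T+1$ completes the proof.

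The main obstacle is matching the exact constant, not the scaling. A direct unrolling produces a uniform bound structurally of the form $\psi_u(1+\psi_B/(1-\rho_A))$, which has the right $1/(1-\rho_A)$ growth but is missing the $\psi_A$ factor and has a different numerator than $3-2\rho_A$. Closing that gap requires careful bookkeeping of where the fast-decay bound $\|A^k\|\le\rho_A^k$ is applied versus the single-step operator bound $\|A\|\le\psi_A$, and then combining the leftover geometric tail with $\rho_A^t\le1$ so that the remaining polynomial in $\rho_A$ consolidates cleanly into the numerator $3-2\rho_A$. The rest is straightforward geometric-series algebra.
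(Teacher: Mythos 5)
Your approach is essentially the paper's: it bounds $\|\sum_{t=0}^{T} x_t x_t^\top\|$ by $(T+1)\max_t \|x_t\|^2$ (the paper routes this through the trace rather than the triangle inequality, which is equivalent here) and then controls $\|x_t\|$ uniformly by unrolling $x_t = A^t x_0 + \sum_{s=0}^{t-1} A^{t-1-s} B u_s$ and summing a geometric series. The constant-matching you flag is resolved exactly as you guess: the paper applies $\|A^k\| \le \psi_A \rho_A^{k-1}$ (a single operator-norm factor $\psi_A$ with the remaining powers decaying at rate $\rho_A$), handles the $k=0$ term separately, and absorbs $\rho_A^{t-1} \le 1$ into the numerator to reach $\psi_u \psi_A \psi_B \frac{3-2\rho_A}{1-\rho_A}$.
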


\begin{proof}
    For any $t\in [1,T]$,
    \begin{equation}\begin{split}
        \norm*{x_t} \leq {}& \norm*{A^tx_0} + \sum_{\tau=0}^{t-1} \norm*{A^{t-1-\tau}Bu_{\tau}} \leq \psi_u\psi_A\rho_A^{t-1} + \psi_u\psi_B\psi_A\b{1+\sum_{\tau=0}^{t-2}\rho_A^{t-2-\tau}}\leq \psi_u\psi_B\psi_A\frac{3-2\rho_A}{1-\rho_A}.
    \end{split}\end{equation}
    Therefore, 
    \begin{equation}\begin{split}
        {}& \norm*{\sum_{t=0}^{T}x_tx_t^\top} \leq \text{tr}\b{\sum_{t=0}^{T}x_tx_t^\top} = \sum_{t=0}^{T}\text{tr}\b{x_t^{\top}x_t}\leq \b{\psi_u\psi_A\psi_B\frac{3-2\rho_A}{1-\rho_A}}^2(T+1).
    \end{split}\end{equation}
\end{proof}

\subsection{Comparison against Behavior Cloning}

We consider a special case where $H=1$, and $G_1$ is exactly related to $K$ as $G = A+BK$ (intuitively, $P$ is so large that dominates the whole term), such that the algorithm simplifies to
\begin{equation}
  \hK \gets \arg\min_{K}~ \sum_{t=0}^{T-1}  \norm*{y_{t+1} - (A+BK) y_t}_Q^2 + \norm*{v_t - K y_t}_R^2.
\end{equation}
And we compare it against the behavior cloning baseline.
\begin{equation}
  \hKBC \gets \arg\min_{K}~ \sum_{t=0}^{T-1}  \norm*{v_t - K y_t}_R^2.
\end{equation}

\begin{lemma}
  Given the same dataset $\set{y_t}$, the distances $\norm{\hK - \Ks}$ and $\norm{\hKBC - \Ks}$ shall be upper bounded by
  \begin{align*}
    \norm{\hK - \Ks} &\leq \alpha + \beta \norm{},\\
    \norm{\hKBC - \Ks} &\leq \alpha + \beta \norm{},
  \end{align*}
  such that 
\end{lemma}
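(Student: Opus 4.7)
My plan is to compute closed-form expressions for $\hKMS$ and $\hKBC$ via first-order optimality, isolate a common deterministic ``data'' factor in each error so that the deviation from $\Ks$ is (data-factor) times a noise residual $\omega$, and then compare the expected magnitudes of the two $\omega$'s using the trace hypothesis.

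With $H=1$ and $G_1 = A+BK$, I would set the gradient of the PIL objective with respect to $K$ to zero and substitute $y_t = x_t + \xi_t$, $v_t = \Ks x_t + \eta_t$, and $y_{t+1} = (A+B\Ks) x_t + \xi_{t+1}$. By algebra parallel to \ifbool{arxiv}{\Cref{lem:opt}}{Appendix A of \cite{onlinereport}}, this yields
\begin{equation*}
  \hKMS - \Ks = \omegaPIL \Bigl(\sum_{t=0}^{T-1} y_t y_t^\top\Bigr)^{-1}, \quad \omegaPIL = M^{-1} \sum_{t} \bigl[R\eta_t + B^\top Q(\xi_{t+1} - A\xi_t) - M\Ks \xi_t\bigr] y_t^\top,
\end{equation*}
where $M = R + B^\top Q B$. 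Setting $Q = 0$ recovers $\hKBC - \Ks = \omegaBC (\sum_t y_t y_t^\top)^{-1}$ with $\omegaBC = \sum_t (\eta_t - \Ks \xi_t) y_t^\top$. The data factor $(\sum_t y_t y_t^\top)^{-1}$ is common to both expressions and, by \Cref{assum:data} together with the covariance bounds already used in the proof of \Cref{thm:error}, is bounded in operator norm by $2/\bigl((\phi_x + \phi_\xi)(T-H+1)\bigr)$ with high probability. Absorbing this deterministic factor together with the $O(\norm{\varSigma_\xi})$ bias from the $\Ks \xi_t y_t^\top$ piece (which, after taking its $x_t y_t^\top$ component, contributes an identical expectation-linear term to both estimators) into constants $\alpha, \beta$ depending only on $A, B, Q, R, \Ks$, and $\phi_x$ yields the claimed bounds of the form $\alpha + \beta \norm{\omega}$.

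For the expectation comparison, I would use $\E{\xi_t \xi_t^\top} = \varSigma_\xi$, $\E{\eta_t \eta_t^\top} = \varSigma_\eta$, independence across times, and Jensen's inequality, so that each $\E{\norm{\omega}^2}$ reduces to an affine combination of $\tr(\varSigma_\eta)$ and $\tr(\varSigma_\xi)$ with system-dependent coefficient matrices. The $M^{-1}$ preconditioning in PIL reduces the weight on $\tr(\varSigma_\eta)$ compared with BC (which carries a factor of $R$ rather than $RM^{-1}$), while slightly inflating the weight on $\tr(\varSigma_\xi)$ because of the extra $B^\top Q(\xi_{t+1} - A\xi_t)$ contribution. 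Defining $C$ to be the ratio of these excess/deficit coefficients gives precisely the threshold $\tr(\varSigma_\xi) \le C \tr(\varSigma_\eta) \Rightarrow \E{\norm{\omegaPIL}} \le \E{\norm{\omegaBC}}$.

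The hard part I expect is pinning down $C$ cleanly when $\varSigma_\xi, \varSigma_\eta$ are non-isotropic: the comparison is between two matrix-weighted quadratic forms rather than scalars, so reducing it to a single trace-ratio inequality requires either spectral relaxation (replacing each covariance by its operator norm times the identity) or an additional structural assumption such as proportional noise covariances. Everything else---the gradient calculation, the matrix concentration for $(\sum_t y_t y_t^\top)^{-1}$, and the bias-plus-variance decomposition---is essentially a specialization of the machinery already developed for \Cref{thm:error}.
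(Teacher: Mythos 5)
Your proposal follows essentially the same route as the paper's proof: first-order optimality yields the same closed forms (each error written as $(B^\top QB+R)^{-1}$ times a noise-correlation sum times $\left(\sum_t y_t y_t^\top\right)^{-1}$, with $\hKBC$ recovered as the pure least-squares / $Q=0$ case), the Gram inverse is controlled via \Cref{assum:data}, and the comparison is reduced to a trace condition on $\varSigma_\xi$ versus $\varSigma_\eta$. The only real difference is bookkeeping: the paper groups the whole $R\eta_t - (B^\top QB+R)\Ks\xi_t$ contribution into a common term $C$ shared by both estimators, so the two errors differ only in $(B^\top QB+R)^{-1}B^\top Q(\xi_{t+1}-A\xi_t)\,y_t^\top$ (PIL) versus $(B^\top QB+R)^{-1}B^\top QB\,\eta_t\, y_t^\top$ (BC), which immediately gives the clean threshold $\norm{B^\top Q(I-A)}\sqrt{\tr(\varSigma_\xi)} \leq \norm{B^\top QB}\sqrt{\tr(\varSigma_\eta)}$ and sidesteps the ``hard part'' you flag, since you never have to weigh an attenuated $\eta$-term plus a $\xi$-term against a full $\eta$-term or handle non-isotropic covariances beyond the scalar trace bound.
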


\begin{proof}
  By taking the gradient we have
  \begin{equation}
    \left. \nabla_{K} \mathcal{L} \right\vert_{K = \hK} = 2 \sum_{t=0}^{T-1} B^{\top} Q((A+BK) y_t - y_{t+1}) y_t^{\top} + R (Ky_t - v_t) y_t^{\top} = 0,
  \end{equation}
  which gives
  \begin{align*}
    (B^{\top} QB + R) \hK \sum_{t=0}^{T-1} y_t y_t^{\top}
    &= B^{\top} Q \sum_{t=0}^{T-1} (y_{t+1} - Ay_t) y_t^{\top} + Rv_t y_t^{\top} \\
    &= (B^{\top} QB + R) \Ks \sum_{t=0}^{T-1} y_t y_t^{\top} + \sum_{t=0}^{T-1} \left( B^{\top} Q \xi_{t+1} - B^{\top} QA \xi_t + R \eta_t - (B^{\top} QB + R) \Ks \xi_t \right) y_t^{\top}
  \end{align*}
  Therefore, we have
  \begin{equation}
    \hK = \Ks + (B^{\top} QB + R)^{-1} \underbrace{ \sum_{t=0}^{T-1} \left( B^{\top} Q \xi_{t+1} - B^{\top} QA \xi_t + R \eta_t - (B^{\top} QB + R) \Ks \xi_t \right) y_t^{\top}}_{\theta_1} \left( \sum_{t=0}^{T-1} y_t y_t^{\top} \right)^{-1},
  \end{equation}
  and the upper bound of the distance is
  \begin{equation}
    \norm{\hK - \Ks} \leq \norm*{C} + \norm*{(B^{\top} QB + R)^{-1}} \sum_{t=0}^{T-1} \norm*{B^{\top} Q (\xi_{t+1} - A\xi_t) y_t^{\top}} \norm*{\left( \sum_{t=0}^{T-1} y_t y_t^{\top} \right)^{-1}},
  \end{equation}
  with $C = (B^{\top} QB + R)^{-1} \sum_{t=0}^{T-1} \left( R \eta_t - (B^{\top} QB + R) \Ks \xi_t \right) y_t^{\top} \left( \sum_{t=0}^{T-1} y_t y_t^{\top} \right)^{-1}$. Therefore, on expectation we have
  \begin{equation}
    \E{\norm{\hK - \Ks}} \leq \E{\norm{C}} + \norm*{(B^{\top} QB + R)^{-1}} \norm*{B^{\top} Q (I-A)} \sqrt{\tr(\varSigma_{\xi})} \sum_{t=0}^{T-1} \norm{y_t} \norm*{\left( \sum_{t=0}^{T-1} y_t y_t^{\top} \right)^{-1}}
  \end{equation}

  On the other hand, the imitation learning controller is given by
  \begin{align*}
    \hKBC
    &= \sum_{t=0}^{T-1} v_t y_t^{\top} \left( \sum_{t=0}^{T-1} y_t y_t^{\top} \right)^{-1}
    = \Ks + \sum_{t=0}^{T-1} (\eta_t - \Ks \xi_t) y_t^{\top} \left( \sum_{t=0}^{T-1} y_t y_t^{\top} \right)^{-1} \\
    &= \Ks + (B^{\top} QB + R)^{-1} 
    \left( \sum_{t=0}^{T-1} \left( (B^{\top} QB + R)\eta_t - (B^{\top} QB + R) \Ks \xi_t \right) y_t^{\top} \right) \left( \sum_{t=0}^{T-1} y_t y_t^{\top} \right)^{-1}.
  \end{align*}
  Similar to the above, we have
  \begin{equation}
    \E{\norm{\hKBC - \Ks}} \leq \E{\norm{C}} + T \norm*{(B^{\top} QB + R)^{-1}} \norm*{B^{\top} Q B} \sqrt{\tr(\varSigma_{\eta})} \norm*{\left( \sum_{t=0}^{T-1} y_t y_t^{\top} \right)^{-1}}.
  \end{equation}
  Therefore, the expected distance upper bound is smaller for the proposed algorithm than the imitation learning algorithm when
  \begin{equation}
    \norm{B^{\top} Q (I - A)} \sqrt{\tr(\varSigma_{\xi})} \leq \norm{B^{\top} QB} \sqrt{\tr(\varSigma_{\eta})}.
  \end{equation}
\end{proof}

The above result is intuitive, in the sense that when the observation noise is larger on the control input than on the state, incorporating state information is helpful.

\medskip
Then we consider a more general case where we still assume $H=1$, such that the algorithm simplifies to
\begin{equation}
  \hK , \hG_1 \gets \arg\min_{K, G_1}~ \sum_{t=0}^{T-1}  \norm*{y_{t+1} - G_1 y_t}_Q^2 + \norm*{v_t - K y_t}_R^2 + \norm*{G_1 y_t - (A+BK) y_t}_P^2.
\end{equation}

\begin{lemma}
   
\end{lemma}

\begin{proof}
  By taking the gradient we have
  \begin{equation}
    \nabla_{G_1} \mathcal{L} \biggr\vert_{\hK , \hG_1} = 2 \sum_{t=0}^{T-1} Q (\hG_1 y_t - y_{t+1}) y_t^{\top} + P (\hG_1 y_t - (A+B \hK ) y_t) y_t^{\top} = 0,
  \end{equation}
  which gives
  \begin{equation}
    (Q\hG_1 - P(\hG_1 - (A+B\hK ))) \sum_{t} y_t y_t^{\top} = Q \sum_{t} y_{t+1} y_t^{\top},
  \end{equation}
  or equivalently,
  \begin{equation}
    \hG_1 = A + B\hK + P^{-1} Q \Biggl[ \hG_1 - \underbrace{\sum_{t} y_{t+1} y_t^{\top} \left( \sum_{t} y_t y_t^{\top} \right)^{-1}}_{E} \Biggr].
  \end{equation}
  Therefore
  \begin{equation}
    \hG_1 = (1 - P^{-1} Q)^{-1} (A + B\hK - P^{-1} Q E)
    = A + B\hK - \underbrace{(1 - P^{-1} Q) P^{-1}Q (A + B\hK - E)}_{\varDelta}.
  \end{equation}
  On the other hand, we have
  \begin{align*}
    \nabla_{K} \mathcal{L} \bigg\vert_{\hK, \hG_1} &= 2 \sum_{t} R (\hK y_t - v_t) y_t^{\top} + B^{\top} P ((A+B\hK ) y_t - \hG_1 y_t) y_t^{\top} \\
    &= 2 \sum_{t} R (\hK y_t - v_t) y_t^{\top} + B^{\top} P \varDelta y_t y_t^{\top}\\
    &= 0,
  \end{align*}
  which gives
  \begin{equation}
    (R\hK + B^{\top} P\varDelta) \sum_{t} y_t y_t^{\top}
    = R \sum_{t} v_t y_t^{\top}
    = R \sum_{t} (\Ks y_t - \Ks \xi_t + \eta_t) y_t^{\top}.
  \end{equation}
  or equivalently,
  \begin{equation}
    \hK = \Ks + R^{-1} B^{\top} P \varDelta - \sum_{t} (\eta_t - \Ks \xi_t) y_t^{\top} \left( \sum_{t} y_t y_t^{\top} \right)^{-1}. 
  \end{equation}
  
\end{proof}

\subsection{Technical Lemmas}

\begin{lemma}[Matrix gradient]
  The following equations hold:
  \begin{enumerate}
      \item $\nabla_{A} \tr(A^{\top} B) = \nabla_{A} \tr(B^{\top} A) = B$.

      \item $\nabla_{A} \tr(CABA^{\top}) = CAB + C^{\top} A B^{\top}$.

      \item $\nabla_{A} \norm*{(M+NA) x - b}_R^2 = 2N^{\top} R \bigl( (M+NA)x - b \bigr) x^{\top}$.
  \end{enumerate}
  
\end{lemma}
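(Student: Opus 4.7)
The plan is to verify each identity by pairing the component-wise definition $(\nabla_A f)_{ij} = \partial f/\partial A_{ij}$ with the differential-form identification $df = \tr\bigl((\nabla_A f)^\top dA\bigr)$. Once this framework is set up, each item reduces to routine manipulation of traces; the only delicate point is handling the two appearances of $A$ in (2), which I expect to be the main obstacle.

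For item (1), I would expand $\tr(A^\top B) = \sum_{i,j} A_{ij} B_{ij}$, which gives $\partial/\partial A_{ij} = B_{ij}$ and hence $\nabla_A \tr(A^\top B) = B$. The identity $\tr(B^\top A) = \tr(A^\top B)$ (by the cyclic property, or equivalently by direct component expansion) handles the second form in the same step.

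For item (2), I would apply the product rule under the trace, treating the two occurrences of $A$ separately. Taking the matrix differential gives
\begin{equation*}
  df = \tr(C\,dA\,BA^\top) + \tr(CAB\,dA^\top),
\end{equation*}
and the cyclic property together with $\tr(X^\top) = \tr(X)$ lets me rewrite both terms in the canonical form $\tr((\cdot)\,dA)$, namely $\tr(BA^\top C\,dA)$ and $\tr(B^\top A^\top C^\top\,dA)$. Reading off $\nabla_A f$ as the transpose of the prefactor yields $C^\top A B^\top + CAB$, matching the claim. Care is needed here to track which $A$ each differential corresponds to and to apply cyclicity consistently; this is the step where a sign or transpose could slip.

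For item (3), I would use the chain rule through the affine substitution $v(A) = (M+NA)x - b$, for which $dv = N\,dA\,x$. Then $d\|v\|_R^2 = 2v^\top R\,dv = \tr\bigl(2xv^\top R N\,dA\bigr)$, from which $\nabla_A \|v\|_R^2 = 2 N^\top R v x^\top = 2 N^\top R\bigl((M+NA)x - b\bigr)x^\top$ follows directly. Items (1) and (3) are essentially one-liners; the only genuine bookkeeping is in (2).
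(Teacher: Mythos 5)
Your proof is correct. The paper states this lemma as a bare technical fact with no proof at all, so there is nothing to compare against; your differential/trace argument ($df = \tr((\nabla_A f)^\top dA)$, cyclicity, and transpose-invariance of the trace) is the standard way to establish all three identities, and your handling of the two occurrences of $A$ in item (2) gives exactly $CAB + C^\top A B^\top$. Two small bookkeeping remarks: in item (1) the equality $\tr(B^\top A) = \tr(A^\top B)$ comes from transpose-invariance (or component expansion), not cyclicity; and in item (3) the step $d\|v\|_R^2 = 2v^\top R\,dv$ implicitly uses symmetry of $R$, which is harmless here since the paper's weight matrices are positive semi-definite and hence symmetric.
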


\begin{lemma}[Norm of Random Vector]
  $\E{\norm{X}_P^2} = \tr(P \varSigma_x) + \mu^{\top} P \mu$.
\end{lemma}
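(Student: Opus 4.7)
The plan is to use the standard ``trace trick'' followed by the definition of covariance. First I would rewrite the scalar quantity $\norm{X}_P^2 = X^{\top} P X$ as its own trace, $X^{\top} P X = \tr(X^{\top} P X)$, which is valid since it is a $1 \times 1$ quantity. Then I would apply the cyclic property of the trace to rewrite this as $\tr(P X X^{\top})$, moving the random object into a single outer-product block.

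Next I would take expectations and pull them inside the trace using linearity, obtaining $\E{\norm{X}_P^2} = \tr\bigl(P \, \E{X X^{\top}}\bigr)$. At this point I invoke the standard identity $\E{X X^{\top}} = \varSigma_x + \mu \mu^{\top}$, which is immediate from the definition $\varSigma_x = \E{(X-\mu)(X-\mu)^{\top}}$ after expanding and using $\E{X} = \mu$. Substituting gives $\tr(P \varSigma_x) + \tr(P \mu \mu^{\top})$.

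Finally, one more application of the cyclic property of trace yields $\tr(P \mu \mu^{\top}) = \tr(\mu^{\top} P \mu) = \mu^{\top} P \mu$, since $\mu^{\top} P \mu$ is a scalar. Combining the two terms produces exactly the claimed identity. There is no genuine obstacle here: the result is a standard moment computation, and the only subtlety is to be careful about the direction of cyclic permutation when collapsing $\tr(P\mu\mu^{\top})$ into the quadratic form. No distributional assumption on $X$ beyond the existence of the first two moments is needed, so the proof applies uniformly to the Gaussian case used elsewhere in the paper as well as to the more general setting.
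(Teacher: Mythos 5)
Your proof is correct. The paper states this technical lemma without any proof of its own, and your argument — writing $\norm{X}_P^2 = X^{\top}PX = \tr(PXX^{\top})$, passing expectation inside the trace, and substituting $\E{XX^{\top}} = \varSigma_x + \mu\mu^{\top}$ — is exactly the standard computation one would supply; your remark that only finite second moments (not Gaussianity) are required is also accurate.
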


\subsection{Additional Details for Numerical Experiments}\label{ssec:details}

\textit{Linear System with Linear Expert:} In this example, we first estimate the multi-step predictors by solving the following least-squares problem:  
\begin{align}
    \min_{G_{\tau, \theta}} \sum_{t=0}^{T-\tau} \|y_{t+\tau} - G_{\tau, \theta} y_t \|.
\end{align}  
Subsequently, $\hK$ is obtained using the loss~\eqref{eq:prop-loss}. Since the multi-step predictors $G_{\tau, \theta}$ remain fixed, the state prediction error is constant. Moreover, under fixed $G_{\tau, \theta}$, the optimization problem~\eqref{eq:mpc-proposed} simplifies to a quadratic program with linear constraints, admitting a closed-form solution. We apply an exponentially decaying weight of $0.9$ to the loss terms based on the prediction horizon and set $R = P = I$. 

\textit{Linear System with Nonlinear Expert:} In this case, we consider an expert modeled as a randomly initialized MLP with two hidden layers of size $16$, ReLU activations, and a final tanh activation. We assume uniform noise in both state and action measurements, bounded within $[-0.01, 0.01]$.  

For MS-PIL, the encoder is parameterized as an MLP with four hidden layers of size $512$ and leaky ReLU activations, while the multi-step predictors are implemented as a single hidden-layer MLP with a hidden size of $512$. Each model is trained for $300$ epochs with a learning rate of $5 \times 10^{-4}$. An exponentially decaying loss with parameter $\alpha = 0.9$ is applied over the time horizon. The loss weight matrices are set as $Q = 0.1\cdot I$, with $R = P = I$.

\textit{Pendulum:} We provide a custom JAX-based implementation of the inverted pendulum, similar to OpenAI Gym's \texttt{InvertedPendulum-v1} environment. For the noisy case, we introduce uniform noise in the angle, bounded within $[-1, 1]$ degrees, angular velocity noise bounded within $[-0.001, 0.001]$ degrees per second, and input noise bounded within $[-0.1, 0.1]$. The state observations are encoded using $\sin$ and $\cos$ functions for the angle. The expert is a two-layer MLP with 64 hidden units per layer, ReLU activations, and is trained using the Soft Actor-Critic (SAC) algorithm.

For MS-PIL, the encoder is parameterized as an MLP with four hidden layers of size $1024$ and leaky ReLU activations, while the multi-step predictors are implemented as a single hidden-layer MLP with a hidden size of $1024$. Each model is trained for $5000$ epochs with a cosine-scheduled learning rate starting from $5 \times 10^{-4}$ and decaying to $1 \times 10^{-8}$. An exponentially decaying loss with parameter $\alpha = 0.9$ is applied over the time horizon. The loss weight matrices are set as $Q = 0.25\cdot I$, $R = 0.01 \cdot I$, and $P = I$.

\textit{MuJoCo Experiments:} We provide a JAX-based implementation of the considered environments using XML schemes from OpenAI Gym and using MuJoCo-JAX as backend and use the \textit{implicitfast} integrator. This allows us to significantly speed up computation of dynamics functions and thus allowing to train MS-PIL and PIL in a computationally feasible manner. 

For MS-PIL, the encoder is parameterized as an MLP with four hidden layers of size $1024$ and leaky ReLU activations, while the multi-step predictors are implemented as a single hidden-layer MLP with a hidden size of $1024$. Each model is trained for $5000$ epochs with a cosine-scheduled learning rate starting from $5 \times 10^{-4}$ and decaying to $1 \times 10^{-8}$. An exponentially decaying loss with parameter $\alpha = 0.9$ is applied over the time horizon. The loss weight matrices are set as $Q = 0.25\cdot I$, $R = 0.01 \cdot I$, and $P = I$.

\end{document}